\documentclass{article} 
\PassOptionsToPackage{compatV3}{fancyhdr}

\usepackage[shortlabels]{enumitem}

\usepackage[utf8]{inputenc} 
\usepackage{hyperref}       
\usepackage{url}            
\usepackage{booktabs}       
\usepackage{amsfonts}       
\usepackage{nicefrac}       
\usepackage{microtype}      
\usepackage{xcolor}         
\usepackage{fullpage}
\usepackage{authblk}
\usepackage{natbib}

\author[1]{Wilson G. Gregory}
\author[2]{George A. Kevrekidis}
\author[3]{Ben Blum-Smith}
\author[3]{Soledad Villar}

\affil[1]{Department of Statistics and Irving Institute for Cancer Dynamics, Columbia University}
\affil[2]{Division of Applied Mathematics, Brown University}
\affil[3]{Department of Applied Mathematics and Statistics, Johns Hopkins University}
    \date{}

\usepackage{parskip}

\usepackage{graphicx}
\usepackage{caption}
\usepackage{subcaption}
\usepackage{tikz}
\usepackage{tikz-cd}

\usepackage{physics}
\usepackage{amsmath}
\usepackage{amsthm}
\usepackage{amstext}
\usepackage{amssymb}
\usepackage{thmtools}
\usepackage{siunitx}
\usepackage{bm}
\usepackage{aligned-overset}
\usepackage{xfrac}

\newtheorem{theorem}{Theorem}[section]
\newtheorem*{theorem*}{Theorem}
\newtheorem{corollary}[theorem]{Corollary}
\newtheorem{lemma}[theorem]{Lemma}
\newtheorem*{lemma*}{Lemma}
\newtheorem{prop}[theorem]{Proposition}
\newtheorem*{prop*}{Proposition}
\newtheorem{definition}[theorem]{Definition}

\theoremstyle{definition}
\newtheorem{defn}{Definition}[section]
\newtheorem{remark}{Remark}[section]
\newtheorem{exmp}{Example}[section]

\usepackage[english]{babel}

\usepackage{pdfpages}

\usepackage{dsfont}
\usepackage{xcolor}
\definecolor{amber}{rgb}{1.0, 0.6, 0.0}
\usepackage{textcomp}
\usepackage{listings}
\usepackage{relsize}
\usepackage{multicol}
\usepackage{hyperref}
\hypersetup{
	colorlinks=true,
	linkcolor=black,
	filecolor=black,      
	urlcolor=black,
	citecolor=blue,
}
\usepackage[capitalise]{cleveref}
\crefname{figure}{Fig.}{Fig.}
\crefname{table}{Table}{Tables}
\crefname{equation}{Eqn.}{Eqns.}
\crefname{algocf}{Algorithm}{Algorithms}
\crefname{exmp}{Example}{Ex.}

\crefname{lemma}{Lemma}{Lemmas}
\crefname{prop}{Proposition}{Propositions}
\crefname{corollary}{Corollary}{Cor.}
\crefname{defn}{Definition}{Definitions}

\DeclareMathOperator*{\argmax}{arg\,max} 
\DeclareMathOperator*{\argmin}{arg\,min} 

\usepackage{physics}
\usepackage{url}

\usepackage{pifont}

\usepackage{bbm}
\usepackage{mathtools}
\usepackage{hhline}
\usepackage{tabularx}
\usepackage[draft, defaultcolor=brown]{changes}

\usepackage{accents}

\usepackage[ruled,vlined]{algorithm2e}

\newcommand{\R}{\mathbb{R}}
\newcommand{\bbR}{\mathbb{R}}
\newcommand{\C}{\mathbb{C}}
\newcommand{\bbZ}{\mathbb{Z}}

\newcommand{\mcT}{\mathcal{T}}

\newcommand{\Sym}{\operatorname{Sym}}
\newcommand{\Orth}{\operatorname{O}}
\newcommand{\Eucl}{\operatorname{E}}

\newcommand{\Hess}{\operatorname{Hess}}

\newcommand{\tensorname}[2]{{#1}_{(#2)}}
\newcommand{\tensor}[2]{$\tensorname{#1}{#2}$-tensor}
\newcommand{\tensors}[2]{$\tensorname{#1}{#2}$-tensors}
\NewDocumentCommand\contract{mg}{%
    \ensuremath{\iota_{#1}\IfNoValueTF{#2}{}{\qty(#2)}}%
}
\newcommand{\ZNZ}[2]{\qty(\bbZ/#1\bbZ)^{#2}}
\title{Warm-starting PDE solvers \\ with any-dimensional machine learning}

\begin{document}

\maketitle

\begin{abstract}
Any-dimensional machine learning models, such as graph neural networks (GNNs), can be naturally trained and evaluated on inputs of different sizes and dimensions. 
Inspired by the GNN transferability literature, we show mathematical conditions under which a partial differential equation (PDE) learning-based solver can be trained in small dimensions and directly applied to solve a higher dimensional PDE in a zero-shot fashion. 
These conditions are based on symmetries in both the partial differential equation and the initial data.
When the equations satisfy the symmetries but the data does not, which is the case for many PDEs arising from physics, we show that our theory gives a principled way of warm-starting low-dimensional PDE solvers for higher dimensional PDEs. 
We apply this method on the heat equation, Burgers' equation, and the compressible Navier--Stokes equations, improving the performance in both zero-shot and typical training regimes on high dimensional data.
For example, we train a surrogate model on 2D Navier--Stokes data and achieve better results on 3D test data than a baseline surrogate model trained on 3D data, while only using 12\% of the flops and 20\% of the total data size.
\end{abstract}

\section{Introduction}
In machine learning practice, it is common to use  models that are defined on a fixed set of parameters and can be evaluated on inputs of arbitrary size or dimension. These were formalized as \emph{any-dimensional machine learning models} in Eitan Levin's PhD thesis \cite{levin2026_thesis}. They are often defined on objects modulo certain symmetries, and can have uniform expressivity across input sizes or dimensions due to a phenomenon known as representation stability \citep{CHURCH2013250, yao2026any}. For example, graph neural networks operate on graphs of arbitrary size and structure, and transformer models can handle variable-length sequences, reusing the same weights across positions, where the positional encoding adapts to length. 
There is a vast literature that explains the mechanism that allows any-dimensional models to generalize across input sizes or dimensions. This started with the study of transferability or size generalization of graph neural networks \citep{ruiz2021graph, levie2021transferability, levie2024graphon, velasco2024graph, maskey2023transferability}, and it was recently generalized beyond graphs \citep{levin2025transferring, levin2026any}. 

In the literature on machine learning for PDEs, neural operators were designed with the purpose of generalizing across resolutions \citep{lu2021learning, li2020fourier, karniadakis2021physics}, but there is not much theory explaining if, how, or why a trained PDE solver can generalize to different spatial domains. The closest setting to the one considered here aggregates data from different dimensions and/or coordinate systems for PDE identification \citep{phan2025pdes, psarellis2024data}. 
Some empirical success in generalizing across dimensions has been made in the realm of foundational models for physics. For example, the Walrus model \citep{mccabe2025walruscrossdomainfoundationmodel}
approaches the problem by embedding low dimensional data in a higher dimensional space, and treating all inputs as higher dimensional.
However, this approach introduces a computational hurdle. 
Accurate PDE simulations often require data sets with high resolution \citep{szalay_turbulence_db,Yeung_Ravikumar_Uma-Vaideswaran_Dotson_Sreenivasan_Pope_Meneveau_Nichols_2025}, which is a challenge when the images scale as $\mathcal{O}(N^n)$ with side length $N$ and dimension $n$. 
Ideally, we would like a machine learning model that  exploits the efficiency benefits of a low dimensional space, and could be adapted to higher dimensions.

In this work, we use the any-dimensional machine learning framework to identify mathematical conditions under which PDE solvers can exactly generalize across spatial dimensions.
Using the intuition gained from this exercise, we develop a method that also applies when these conditions do not hold: namely, using a pretrained PDE solver in low dimensions to warm-start the learning of a PDE solver in higher dimensions. 
We empirically show that this method provides a computational advantage in more general settings.  

This work is inspired by the transferability framework for graph neural networks, and more generally the any-dimensional learning and representation stability literature (see Appendix \ref{app.lit_review} for a brief literature review). 
This framework relies on some form of symmetry: the function class must be constrained so it does not grow without bounds as the input grows.
The main assumption we make is that the PDE is symmetric with respect to spatial Euclidean transformations. 
For physics applications, Euclidean symmetry is a natural assumption because the fundamental laws of physics are homogeneous and isotropic. 
Therefore, many PDEs arising from physics satisfy Euclidean symmetries (e.g., Navier--Stokes, the Boltzmann equation, the heat equation, etc.). 
Note that for our warm-starting procedure, only the equation needs to fulfill a symmetry assumption---not the initial data, boundary conditions, or domain. 

We remark that the mathematical framework would allow us to potentially replace the Euclidean symmetry assumptions with other symmetries  (coordinate permutation, Poincar\'e symmetry, or affine symplectic symmetries, for example) with additional work. However, some form of assumption constraining the function class will always be required to be able to generalize across dimensions.

\paragraph{Our contributions}
\begin{enumerate}[(a)]
    \item We develop a framework in which a PDE can be defined across dimensions if it is expressed in terms of orthogonally invariant differential operators on jet space.  
    Examples of such operators include the Laplacian and the Dirichlet energy density.
    \item We explicitly describe conditions under which the solution of a PDE in $\R^n$ expressed in terms of $\Orth(n)$-invariant differential operators (as in (a)) can be lifted to a solution of a corresponding PDE in $\R^{n+1}$.
    \item We propose a mathematically principled method to lift learning-based PDE solvers from dimension $n$ to dimension $n+1$. 
    Under the assumptions of (b), we show that it maps PDE solutions in $n$ dimensions to PDE solutions in $n+1$ dimensions. 
    When the assumptions of (b) are not satisfied, we develop the Nepo warm-start method to provide an advantageous starting point for $(n+1)$-dimensional solvers.
    \item We demonstrate the Nepo warm-start approach with the heat equation, Burgers' equation, and the compressible Navier--Stokes equations.
    The warm-start allows, for example, a Navier--Stokes model trained only on 2D data to outperform one trained on 3D data while using 12\% of the flops and 20\% of the total data size.
\end{enumerate}

\section{Any-dimensional PDEs}\label{sec:anyd_pdes}

Consider a partial differential equation 
\begin{equation}
    \frac{\partial u}{\partial t}=D\left(x_1,\dots,x_n, u, \frac{\partial u}{\partial x_1}, \dots, \frac{\partial u}{\partial x_n}, \dots\right)
\end{equation}
characterizing the time evolution of some smooth function $u:\mathbb{R}^n\times [0,\infty) \rightarrow \mathbb{R}$. We denote $u_t:\mathbb R^n\to \mathbb R$ a point in the time evolution, but we sometimes drop the subscript $t$ for notational convenience (so that $u$ can also denote just a smooth function $\mathbb R^n\rightarrow \mathbb R$).  

A priori, it is necessary to pin down $n$ before it is even possible to write down such an equation: it controls the inputs that appear on the right side. On the other hand, the starting point for our investigation is the observation that certain important partial differential equations are written in a language that is meaningful without committing to some specific $n$. For example, the heat equation has the form
\begin{equation}
    \frac{\partial u}{\partial t} = \Delta u, 
\end{equation}
and the Laplacian operator $\Delta$ is defined on (adequately differentiable) functions on $\R^n$ for any $n$. We begin by seeking a general conceptual framework into which such examples fit. 

If the equation has partial derivatives of order at most $k$, then the function $D$ defining the partial differential equation is defined on the $k$th-order jet space 
\begin{equation}
    J^k(\R^n):=\left\{\left(x_1,\ldots, x_n, u, \frac{\partial u}{\partial x_1}, \ldots, \frac{\partial u}{\partial x_n},\frac{\partial^2 u}{\partial x_1^2},\frac{\partial^2 u}{\partial x_1\partial x_2},\dots,\frac{\partial^k u}{\partial x_n^k}\right)\right\}=\R^n\times U_{n,k}
\end{equation}
where $\R^n$ is the space domain of the partial differential equation, with coordinates $x_1,\dots,x_n$, and $U_{n,k}:=\R^{N_{n,k}}$ has coordinates corresponding to the function $u$ and all its partial derivatives up to $k$th order, of which together there are $N_{n,k}:=\binom{n+k}{k}$ in total. The space $J^k(\R^n)$ is a (trivial) fiber bundle (the {\em jet bundle}) with base $\R^n$ and fiber $U_{n,k}$. 

We want to consider differential operators that, like the Laplacian, are defined in every dimension. To do so, we consider an $n$-indexed sequence of functions $D_n: J^k(\R^n) \to \mathbb R$, giving us an $n$-indexed family of partial differential equations
\begin{equation} \label{eq.Dn}
    \frac{\partial u}{\partial t}=D_n\left(\mathbf x, u, \frac{\partial u}{\partial \mathbf x},\dots, \frac{\partial^k u}{\partial {\bf x}^k}\right)
\end{equation}
(where $\bf x$ abbreviates $x_1,\dots,x_n$ and $\partial^ju/\partial{\bf x}^j$ abbreviates the collection of partial derivatives of order $j$).
We will define a condition (the {\em compatibility condition}) under which we may regard the $D_n$ as ``the same".

\begin{minipage}{0.58\textwidth}
\begin{definition}\label{def:embed-Cinfty}
Let $\pi_n:\R^{n+1}\rightarrow \R^n$
be the projection $(x_1,\dots,x_n,x_{n+1})\mapsto (x_1,\dots,x_n)$ to the first $n$ coordinates. Given a smooth function $u\in C^\infty(\R^n)$, we can pull it back along $\pi_n$, yielding a function $\phi_n (u)\in C^\infty(\R^{n+1})$:
\begin{equation} \label{eq.phi_k=0}
\phi_n (u) := u \circ \pi_n.
\end{equation}
This defines an embedding $\phi_n:C^\infty(\R^n)\rightarrow C^\infty(\R^{n+1})$.
\end{definition}
\end{minipage}
\begin{minipage}{0.42\textwidth}
    \includegraphics[width=\linewidth]{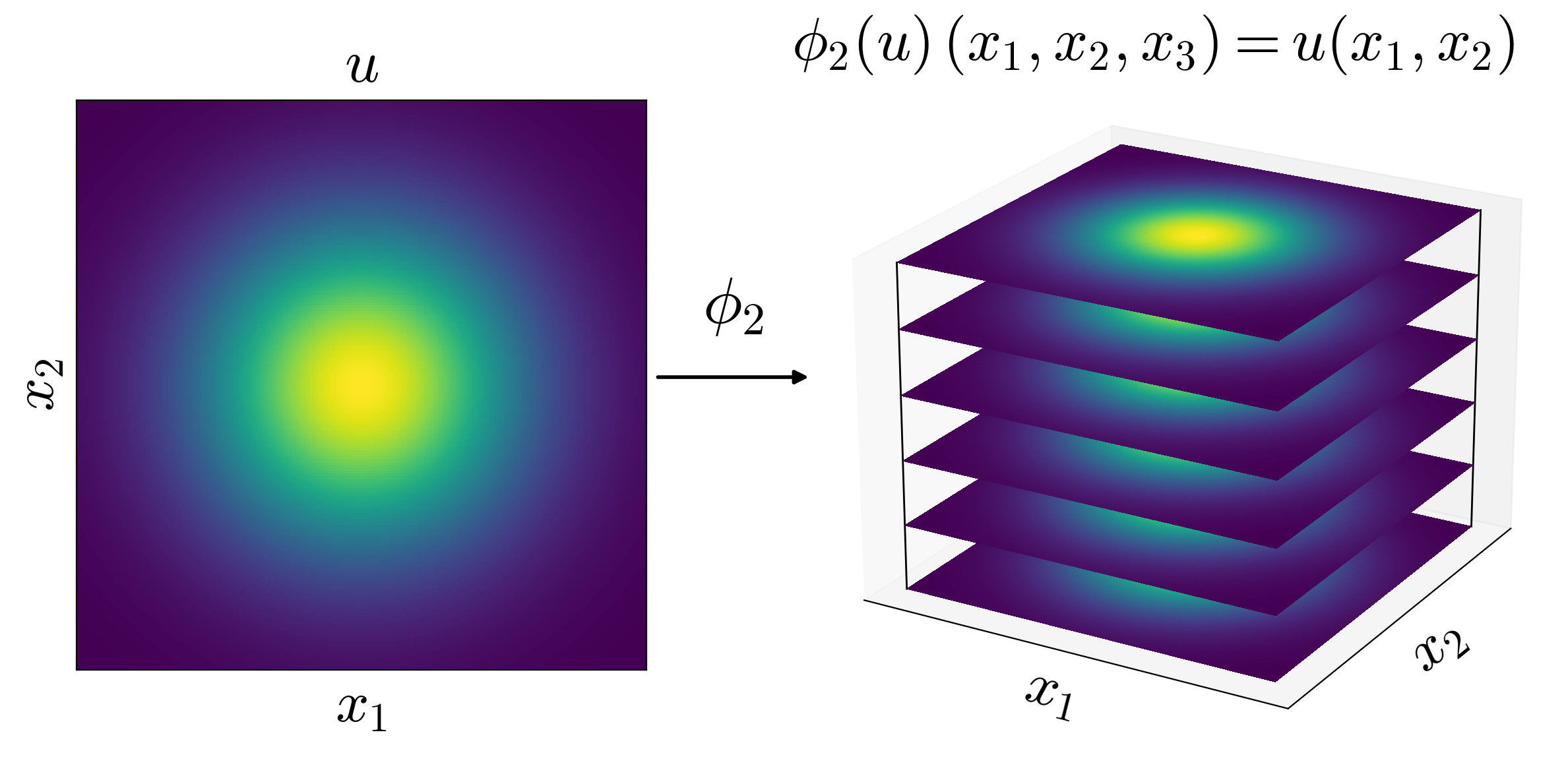}
\label{fig:duplication_diagram}
\end{minipage}

We may think of the embedding $\phi_n$ as a `duplication' embedding on smooth functions, as visualized in \Cref{def:embed-Cinfty}, where we extend a 2D scalar field as a constant in the new coordinate.

Given a smooth function $u\in C^\infty(\R^n)$, we can construct a section of the jet bundle $J^k(\R^n)$, i.e. a function in $\C^\infty(\R^n, U_{n,k})$ that sends $\mathbf x \in \R^n$ to the evaluations of $u$ and all its partial derivatives up to the $k$th order, at $\mathbf x$. 
This section is called the $k$th {\em prolongation} of $u$, and we will denote it $s_u$. 
The embedding $\phi_n$ just defined can be extended to an induced embedding, which we also call $\phi_n$, from the set $C^\infty(\R^n,U_{n,k})$ of smooth sections of $J^k(\R^n)$ into the set $C^\infty(\R^{n+1},U_{n+1,k})$ of smooth sections of $J^k(\R^{n+1})$, by extending the effect of $\phi_n$ to prolongations. 
By construction, the pulled-back function $\phi_n (u)$ does not depend on the last coordinate $x_{n+1}$, so all partial derivatives involving $x_{n+1}$ vanish, while the values of all partial derivatives involving only $x_1,\dots,x_n$ coincide with the corresponding partial derivative of $u$ evaluated at $(x_1,\dots,x_n)$. 
Therefore we make the following definition:

\begin{definition}\label{def:jet-embedding}
    We define an embedding of the fiber of $J^k(\mathbb R^n)$
    \begin{equation}\label{eq:fiber-embedding}
        \tilde \phi_n:U_{n,k}\rightarrow U_{n+1,k}
    \end{equation} 
    by zero-padding in all the coordinates corresponding to a partial derivative involving $x_{n+1}$. We define an embedding of sections of $J^k(\mathbb R^n)$ in sections of $J^k(\mathbb R^{n+1})$ by
    \begin{align}
\phi_n:C^\infty(\R^n,U_{n,k})&\rightarrow C^\infty(\R^{n+1},U_{n+1,k})\\
    s &\mapsto( \mathbf x \mapsto \tilde \phi_n(s( \pi_n(\mathbf x)))) ~,\label{eq:section-embedding}
    \end{align}
    where $s \in C^\infty(\R^{n+1},U_{n+1,k})$ is an abitrary section.
\end{definition}

Note that the $\phi_n$ in \eqref{eq:section-embedding} specializes to the $\phi_n$ in \eqref{eq.phi_k=0} when $k=0$, so the notation is justified. 
An $n$-indexed family of functions $D_n:J^k(\mathbb R^n)\rightarrow\mathbb R$, defining a sequence of differential equations as in \eqref{eq.Dn}, also induces an $n$-indexed family of maps
\begin{equation}
    \widehat D_n: C^\infty(\R^n,U_{n,k})\rightarrow C^\infty(\R^n),
\end{equation}
\begin{equation}
    \widehat D_n(s)(\mathbf x):=D_n(\mathbf x,s(\mathbf x)).
\end{equation}
We are interested in establishing a theory under which the following diagram commutes:
\begin{equation}
\label{eq.diagram}
    \begin{tikzcd}
         \arrow[r, hookrightarrow, "\dots"]
    & C^\infty(\R^{n},U_{n,k}) \arrow[r, hookrightarrow, "\phi_{n}"] \arrow[d, "\widehat D_{n}"]
    & C^\infty(\R^{n+1},U_{n+1,k}) \arrow[r, hookrightarrow, "\phi_{n+1}"] \arrow[d, "\widehat D_{n+1}"]
    & C^\infty(\R^{n+2},U_{n+2,k}) \arrow[r, hookrightarrow, "\dots"] \arrow[d, "\widehat D_{n+2}"]
    & {} \\
    \arrow[r, hookrightarrow, "\dots"]
    & C^\infty(\R_{n}) \arrow[r, hookrightarrow, "\phi_{n}"]
    & C^\infty(\R^{n+1}) \arrow[r, hookrightarrow, "\phi_{n+1}"]
    & C^\infty(\R^{n+2}) \arrow[r, hookrightarrow, "\dots"]
    & {}
    \end{tikzcd}
\end{equation}
This is a bare minimum requirement if we hope to think of the $D_n$ as being ``the same differential equation in different dimensions". But it is not enough by itself. This diagram depends on the choice of the particular orthogonal projection $\pi_n:\R^{n+1}\rightarrow \R^n$ mentioned in Definition~\ref{def:embed-Cinfty}; composing either the domain or codomain with an isometry would lead to a different projection $\pi_n':\R^{n+1}\rightarrow \R^n$, which would in turn produce a different embedding $\phi_n':C^\infty(\R^n)\hookrightarrow C^\infty(\R^{n+1})$ defined by pullback along $\pi_n'$, and different embeddings $\tilde \phi_n':U_{n,k}\hookrightarrow U_{n+1,k}$ and  $\phi_n':C^\infty(\R^{n},U_{n,k})\hookrightarrow C^\infty(\R^{n+1},U_{n+1,k})$ defined by the effect of this new pullback on prolongations. Therefore the sense in which the $D_n$'s are ``the same" is not forced to depend on a choice of coordinates, we ask that \eqref{eq.diagram} commute for these alternative $\phi_n'$'s as well.

We give a sufficient condition on the $D_n$ that guarantees this coordinate-free strengthening of \eqref{eq.diagram}:

\begin{prop}\label{prop:ortho-sym}
    If a family of functions $D_n:J^k(\mathbb R^n)\rightarrow \mathbb R$ makes \eqref{eq.diagram} commute as written (i.e., with  the family of embeddings $\phi_n$ as above), and the $D_n$ do not depend on $\mathbf x\in \R^n$ and thus are functions only of $U_{n,k}$, and if furthermore they are invariant with respect to the natural action of the orthogonal group $\Orth(n)$ on $U_{n,k}$, then \eqref{eq.diagram} would also commute with any choice of $\phi_n'$ as above in place of $\phi_n$.
\end{prop}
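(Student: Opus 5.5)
The plan is to show that any alternative embedding $\phi_n'$ equals the standard $\phi_n$ conjugated by orthogonal maps. Then commutativity transfers through the $\Orth(n)$- and $\Orth(n+1)$-invariance of $D_n$ and $D_{n+1}$. Concretely, any alternative orthogonal projection has the form $\pi_n' = A\circ\pi_n\circ B$, where $A\in\Orth(n)$ and $B\in\Orth(n+1)$. (If one also allows translations, then $D_n$ being independent of $\mathbf x$ handles those in the same way.) Write $\rho_A$ for the action of $A$ on $C^\infty(\R^n)$ by $u\mapsto u\circ A$. The pullback along $\pi_n'$ is then
\[
\phi_n'(u)=u\circ A\circ \pi_n\circ B=\rho_B\bigl(\phi_n(\rho_A u)\bigr).
\]

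The first step is to transfer this factorization to prolongations. Prolongation is equivariant: for $g$ orthogonal, $s_{u\circ g}(\mathbf x)=g\cdot s_u(g\mathbf x)$, where $g\cdot$ denotes the natural linear action of $\Orth(n)$ on $U_{n,k}$. This action is the tensor action on each symmetric tensor $\partial^j u/\partial\mathbf x^j$, obtained from the chain rule. Since the induced $\phi_n'$ on sections is by definition the effect of the pullback on prolongations, we get
\[
\phi_n'(s)(\mathbf x)=B^{\ast}\cdot\tilde\phi_n\bigl(A^{\ast}\cdot s(A\pi_n B\mathbf x)\bigr),
\]
with the appropriate transposes written $A^{\ast}$ and $B^{\ast}$. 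In other words, $\phi_n'=\beta\circ\phi_n\circ\alpha$, where $\alpha$ and $\beta$ are the induced actions of $A$ and $B$ on sections of the jet bundles. These actions combine the fiber action with precomposition in the base.

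The second step is to show that invariance turns each $\widehat D_n$ into an intertwiner. Because $D_n$ depends only on the fiber and is $\Orth(n)$-invariant,
\[
\widehat D_n(\alpha s)(\mathbf x)=D_n\bigl(A^{\ast}\cdot s(A\mathbf x)\bigr)=D_n\bigl(s(A\mathbf x)\bigr)=\rho_A\bigl(\widehat D_n s\bigr)(\mathbf x).
\]
So $\widehat D_n\circ\alpha=\rho_A\circ\widehat D_n$, and likewise $\widehat D_{n+1}\circ\beta=\rho_B\circ\widehat D_{n+1}$. Combining this with the assumed commutativity $\widehat D_{n+1}\circ\phi_n=\phi_n\circ\widehat D_n$ gives
\[
\widehat D_{n+1}\circ\phi_n'=\rho_B\circ\widehat D_{n+1}\circ\phi_n\circ\alpha=\rho_B\circ\phi_n\circ\widehat D_n\circ\alpha=\rho_B\circ\phi_n\circ\rho_A\circ\widehat D_n=\phi_n'\circ\widehat D_n.
\]
The last equality is the $k=0$ case of the factorization above.

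I expect the main obstacle to be the bookkeeping in the first step. One must check carefully that the induced $\phi_n'$ on arbitrary smooth sections (not only prolongations) is exactly $\beta\circ\phi_n\circ\alpha$. The paper defines $\phi_n'$ on sections only through its effect on prolongations, so I will take the formula above as the definition: it is the fiberwise linear map obtained from the chain rule, extended pointwise to all sections. One must also verify that the inverse/transpose conventions on $A$ and $B$ are consistent. Because everything is fiberwise linear and pointwise, this reduces to the chain rule for linear changes of coordinates, so it is routine once the conventions are fixed.
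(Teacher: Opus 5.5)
Your proposal is correct and is essentially the paper's argument. Both write the alternative projection in terms of $\pi_n$ and isometries, note that the induced jet embedding is $\tilde\phi_n$ twisted by the orthogonal action on the fiber, and use invariance to remove that action before invoking the assumed commutativity. The paper just writes $\pi_n'=\pi_n g$ with a single $g\in\Eucl(n+1)$, absorbing your codomain factor via $A\pi_n=\pi_n(A\oplus 1)$, so it needs only the invariance of $D_{n+1}$ and works with a pointwise chain of equalities rather than your intertwiner identities.
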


The hypothesis of Proposition~\ref{prop:ortho-sym} will be called the \emph{compatibility condition}.
The proof is in \Cref{ap:proof_prop_ortho_sym}.
The action of $\Orth(n)$ on $U_{n,k}$ mentioned in Proposition~\ref{prop:ortho-sym} is isomorphically its action on 
\begin{equation}
    \R\oplus \R^n \oplus \Sym^2(\R^n) \oplus \dots \oplus \Sym^k(\R^n),
\end{equation}
where $\Sym^j(\R^n)$ is the set of symmetric tensors of order $j$---this is because the collection of partial derivatives of order $j$ forms a symmetric tensor.\label{req.O-tensor-invariant}

\begin{remark}
    The compatibility condition given here is the point at which a symmetry requirement enters our theory. The proposition says that this condition guarantees that the equations $D_n$ satisfy \eqref{eq.diagram} without being committed to a particular choice of origin and orthonormal coordinates in each $\mathbb R^n$. In this sense, it is a natural condition in the context of equations modeling physical phenomena, since the laws of physics have no such commitment.
\end{remark}


\begin{remark}
    The compatibility condition only imposes a symmetry constraint on the differential equation itself, not the initial data or the boundary conditions.
\end{remark}

The compatibility condition gives us the general framework that the heat equation illustrates: any $D_n$ satisfying it can meaningfully be applied to $J^k(\R^n)$ for all sufficiently high $n$. In fact, if one fixes the degree and order of the PDE, then there is a fixed, finite basis of differential operators, independent of $n$, in which the $D_n$'s can be expressed uniformly in $n$. The Laplacian $\Delta u$ (order 2, degree 1) and the Dirichlet energy density $\|\nabla u\|^2$ (order 1, degree 2) are representative examples of such operators (see Example \ref{ex:deg-3-order-2-invars} for more examples). This can be understood in terms of {\em representation stability} for the orthogonal group, as explained in Appendix~\ref{app.examples}.

\section{Any-dimensional solution operators}\label{sec:anyd_pde_solution_operators}

We now address the question of compatibility for solution operators of differential equations.
As one might expect, even if a PDE is compatible in the sense of Proposition \ref{prop:ortho-sym}, the associated solution operator is generically not.
Nevertheless, there are families of PDEs whose solution operator does have a simple parametric dependence on the dimension of the space they are defined on; in such cases, one may hope that a lower-dimensional solution operator can be `lifted' to a higher dimensional space. For example, the heat equation 
$\pdv{u}{t} = \Delta u$ on $\R^n$ satisfies
\begin{equation}\label{eq:heat_kernel}
    u(x,t) = \frac{1}{(4\pi t)^{n/2}}\int_{\R^n}\exp(-\norm{x-y}_2^2/4t)u_0(y)dy ~,
\end{equation}
with the dimension dependence appearing purely as a scaling factor outside the integral. If such a straightforward dependence on dimension (as a parameter) is known, it would in principle allow us to transfer the solution operator between $\R^n$ subsets for varying dimension $n$. However, even if a PDE admits a solution operator in the form of a convolution, the dimension-dependent terms rarely factor outside the domain-dependent integral, and direct transferability cannot be expected. Instead, we consider a weaker equivalence framework outlined below. All the proofs of the statements of this section are in \Cref{ap:compatibility_sln_op}.

\begin{defn}
    If $D_n$ satisfies the compatibility condition of Proposition \ref{prop:ortho-sym},
    then we say that $F_n(u) = \pdv{u}{t} - \widehat{D}_n(s_u)$ is a \emph{compatible scalar PDE}, where $s_u$ is the prolongation of $u$.
\end{defn}



\begin{defn}
    Let $F_n(u)=0$ be a scalar PDE defined on a set $U\subseteq\R^n$. The associated solution operator $\mathcal{S}_{F_n}^t:(u_0)\mapsto (u_t)$ is the mapping from initial conditions to the solution of the equation at time $t$. 
\end{defn}

We then have the following result:

\begin{lemma}[Operator Symmetry]\label{lem:operator-symmetry}
    Suppose $F_n(u)=0$ is a compatible scalar PDE with associated solution operator $\mathcal{S}_{F_n}^t (u_0) \mapsto u_t$.
    If there is a uniqueness theorem for the equation $F_n=0$, then $\mathcal{S}_{F_n}^t$ is $\Eucl(n)$-equivariant, i.e. for all initial conditions $u_0$ and $g \in \Eucl(n)$,
    \begin{align}
        \mathcal{S}_{F_n}^t(g\cdot u_0) = g\cdot \mathcal{S}_{F_n}^t (u_0)~.
    \end{align}
\end{lemma}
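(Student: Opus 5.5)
The plan is the standard ``symmetry plus uniqueness'' argument. We show that if $u$ solves $F_n(u)=0$ with initial data $u_0$, then $g\cdot u$ solves $F_n=0$ with initial data $g\cdot u_0$. Uniqueness then forces $\mathcal{S}_{F_n}^t(g\cdot u_0)=g\cdot u_t=g\cdot\mathcal{S}_{F_n}^t(u_0)$.

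\textbf{Setting up the action.} For $g\in\Eucl(n)$ we write $g\mathbf x = Q\mathbf x + b$ with $Q\in\Orth(n)$ and $b\in\R^n$. The action on functions is $(g\cdot u)(\mathbf x,t) = u(g^{-1}\mathbf x,t)$, with time left alone. We then compute how the prolongation transforms. By the chain rule, the $j$th order partial derivatives of $g\cdot u$ at $\mathbf x$ are obtained from those of $u$ at $g^{-1}\mathbf x$ by applying $Q$ to each tensor index. The translation $b$ contributes nothing to derivatives, because $g^{-1}$ is affine with linear part $Q^{-1}=Q^\top$. Concretely,
\begin{equation*}
s_{g\cdot u}(\mathbf x) = Q\cdot s_u(g^{-1}\mathbf x),
\end{equation*}
where $Q$ acts on $U_{n,k}\cong \R\oplus\R^n\oplus\Sym^2(\R^n)\oplus\dots\oplus\Sym^k(\R^n)$ by the natural tensor action. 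This identification is the one stated after Proposition~\ref{prop:ortho-sym}. Checking it carefully, including that the action is the natural one on symmetric tensors and not its dual, is the only real computation. Since $Q$ is orthogonal, $Q^{-\top}=Q$, so the ambiguity disappears.

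\textbf{Applying the compatibility condition.} By hypothesis, $D_n$ does not depend on $\mathbf x$ and is $\Orth(n)$-invariant on $U_{n,k}$. Hence
\begin{equation*}
\widehat D_n(s_{g\cdot u})(\mathbf x) = D_n\big(Q\cdot s_u(g^{-1}\mathbf x)\big) = D_n\big(s_u(g^{-1}\mathbf x)\big) = \big(g\cdot \widehat D_n(s_u)\big)(\mathbf x).
\end{equation*}
Independence from $\mathbf x$ handles the translation part, and orthogonal invariance handles the rotation part. Because $g$ does not involve $t$, we also have $\pdv{t}(g\cdot u)=g\cdot \pdv{u}{t}$. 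Combining these gives $F_n(g\cdot u)=g\cdot F_n(u)=0$. Finally, $(g\cdot u)(\cdot,0)=g\cdot u_0$, so $g\cdot u$ is a solution with initial data $g\cdot u_0$.

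\textbf{Concluding by uniqueness.} The hypothesized uniqueness theorem says that $\mathcal{S}_{F_n}^t(g\cdot u_0)$ is the unique solution at time $t$ with initial data $g\cdot u_0$. It therefore equals $(g\cdot u)(\cdot,t)=g\cdot\mathcal{S}_{F_n}^t(u_0)$.

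\textbf{Main obstacle.} The step most likely to cause trouble is being precise about the class in which uniqueness holds, for example solutions with given decay or regularity on a domain $U$. We must check that $g\cdot u$ remains in that class: it needs to stay smooth and satisfy the same growth bounds, which holds because $g$ is an isometry. If $U\neq\R^n$, we also need $g$ to preserve $U$, or else we interpret the statement with $U$ replaced by $gU$. I would state this as an implicit assumption, namely that the uniqueness class is $\Eucl(n)$-stable.
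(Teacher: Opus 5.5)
Your proposal is correct and follows the same symmetry-plus-uniqueness argument as the paper: show that $g\cdot u$ solves $F_n=0$ with initial data $g\cdot u_0$, then invoke uniqueness. The paper simply asserts that $F_n$ is $\Eucl(n)$-invariant. You instead derive it from the compatibility condition via $s_{g\cdot u}(\mathbf x)=Q\cdot s_u(g^{-1}\mathbf x)$, and you point out that the uniqueness class and domain $U$ must be $\Eucl(n)$-stable, a hypothesis the paper leaves implicit.
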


Next, we note that any solution $u$ satisfying $F_n(u)=0$ in $\R^n$ can be embedded to the solution space of $F_{n+1}$ in $\R^{n+1}$ as follows:

\begin{lemma}[Solution Embeddings]
    \label{lem:sln_embeddings}
    For any $u\in\C^{\infty}(\R^n\times [0,\infty))$ satisfying the compatible scalar PDE $F_n(u)=0$ on $\R^n$, we have that $F_{n+1}(\phi_n(u))=0$ on $\R^{n+1}$. Furthermore, the same statement holds with $\phi_n$ replaced by arbitrary $\phi_n'$ as in Section~\ref{sec:anyd_pdes}.
\end{lemma}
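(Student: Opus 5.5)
The approach is to verify $F_{n+1}(\phi_n(u))=0$ pointwise, splitting it into the time-derivative term and the spatial operator term. Throughout, $\phi_n$ acts on time-dependent functions slice by slice, $\phi_n(u)(\mathbf x,t)=u(\pi_n(\mathbf x),t)$; write $v=\phi_n(u)$.

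\textbf{Time derivative.} Since $\pi_n$ does not involve $t$, differentiation in $t$ commutes with the pullback. Hence $\partial_t v=\phi_n(\partial_t u)$.

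\textbf{Spatial term.} We need $\widehat D_{n+1}(s_v)=\phi_n(\widehat D_n(s_u))$ at each time $t$. By the discussion preceding Definition~\ref{def:jet-embedding}, the $k$th prolongation of the pullback satisfies $s_{\phi_n(u_t)}=\phi_n(s_{u_t})$ as sections of $J^k(\R^{n+1})$. The reason is that all partial derivatives involving $x_{n+1}$ vanish, while those involving only $x_1,\dots,x_n$ equal the corresponding derivatives of $u_t$ evaluated at $\pi_n(\mathbf x)$; this is precisely the zero-padding map $\tilde\phi_n$ composed with $\pi_n$. The commutativity of \eqref{eq.diagram}, which is part of the compatibility condition, then gives
\[
\widehat D_{n+1}(s_{v_t})=\widehat D_{n+1}(\phi_n(s_{u_t}))=\phi_n(\widehat D_n(s_{u_t})).
\]

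\textbf{Conclusion for $\phi_n$.} Subtracting the two identities yields
\[
F_{n+1}(v)=\phi_n\bigl(\partial_t u-\widehat D_n(s_u)\bigr)=\phi_n(F_n(u))=\phi_n(0)=0.
\]
Smoothness of $v$ is immediate, since it is a composition of smooth maps.

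\textbf{Arbitrary $\phi_n'$.} Proposition~\ref{prop:ortho-sym} says that \eqref{eq.diagram} also commutes with $\phi_n'$ in place of $\phi_n$, so it suffices to rerun the same argument with $\pi_n'$. The one step needing care is the prolongation identity $s_{\phi_n'(u)}=\phi_n'(s_u)$, where $\phi_n'$ on sections is defined by the effect of pullback along $\pi_n'$ on prolongations. This identity holds by definition of $\phi_n'$ on sections; concretely it is the chain rule applied to an affine map. The time-derivative step is unchanged, because $\pi_n'$ is still independent of $t$.

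\textbf{Alternative route.} One could instead write $\pi_n'=h\circ\pi_n\circ g$ with $g\in\Eucl(n+1)$ and $h\in\Eucl(n)$. Then $\phi_n'(u)=g\cdot\phi_n(h\cdot u)$, where $h\cdot u$ solves $F_n=0$ because $D_n$ is $\Orth(n)$-invariant and independent of $\mathbf x$. Likewise, $g$ preserves solutions of $F_{n+1}=0$ by the same invariance. This is the same symmetry argument that underlies Lemma~\ref{lem:operator-symmetry}, but applied directly to solutions, so no uniqueness theorem is needed.

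\textbf{Main obstacle.} The hardest step is making the prolongation identity precise, namely that the jet of the pullback equals the pulled-back, transformed jet. For $\phi_n$ this is a direct partial-derivative check. For $\phi_n'$ I would use the decomposition above, reducing to $\phi_n$ together with the natural $\Orth$-action on $U_{n,k}\cong\bigoplus_j\Sym^j$. That is cleaner than tracking the chain rule through derivatives of all orders.
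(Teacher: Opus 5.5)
Your proposal is correct and matches the paper's proof. The chain rule handles the time derivative, and commutativity of \eqref{eq.diagram} applied to the prolongation $s_u$ handles the spatial term; you helpfully make explicit the identity $s_{\phi_n(u)}=\phi_n(s_u)$, which the paper uses tacitly, and like the paper you reduce the case of $\phi_n'$ to Proposition~\ref{prop:ortho-sym}, with your alternative argument via $\pi_n'=h\circ\pi_n\circ g$ also sound (up to the convention for how $g,h$ act on functions) but not needed.
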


For example, If $\pdv{u}{t}=\pdv[2]{u}{x_1}$ on $\R$, the embedded $\phi_n(u)$ satisfies $\pdv{u}{t}=\pdv[2]{u}{x_1}+\pdv[2]{u}{x_2}$ on $\R^2$, since the last term $\pdv[2]{u}{x_2}=0$ is identically zero.

The previous two technical lemmas can be used to show that, under this `duplication' embedding framework, the solution operators for $\mathcal{S}_{F}$ for $n$ and $n+1$ dimensions `agree', namely:

\begin{prop} \label{prop.solution-symmetry}
    Let $F_n$ be a compatible scalar PDE with a uniqueness theorem for $F_n = 0$. 
    Let $u_0$ be any initial conditions on $U\subseteq\R^n$, and let $\phi_n$ be the embedding from Definition \ref{def:embed-Cinfty}. Then, $\mathcal{S}_{F_{n+1}}^t(\phi_n(u_0))=\phi_n(\mathcal{S}_{F_n}^t(u_0))$. 
    Furthermore, the same statement holds with $\phi_n$ replaced by $\phi_n'$ as in Section~\ref{sec:anyd_pdes}.
\end{prop}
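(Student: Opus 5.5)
The plan is to combine Lemma~\ref{lem:sln_embeddings} with the uniqueness hypothesis. Let $u(\cdot,t)=\mathcal{S}_{F_n}^t(u_0)$ be the solution of $F_n(u)=0$ with initial data $u_0$ on $U$. Consider $v:=\phi_n(u)$, meaning $v(\cdot,t)=\phi_n(u(\cdot,t))=u(\pi_n(\cdot),t)$ on $\pi_n^{-1}(U)=U\times\R\subseteq\R^{n+1}$. By Lemma~\ref{lem:sln_embeddings}, $F_{n+1}(v)=0$. At $t=0$ we have $v(\cdot,0)=\phi_n(u_0)$, so $v$ is a solution of the $(n+1)$-dimensional problem with initial data $\phi_n(u_0)$. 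The uniqueness theorem, applied now to $F_{n+1}=0$, then forces $\mathcal{S}_{F_{n+1}}^t(\phi_n(u_0))=v(\cdot,t)=\phi_n(\mathcal{S}_{F_n}^t(u_0))$.

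I would also check that $\phi_n$ commutes with $\partial_t$. This holds because $\phi_n$ acts only in the spatial variables, so $\partial_t\phi_n(u)=\phi_n(\partial_t u)$. This identity is needed to read Lemma~\ref{lem:sln_embeddings} as a statement about the time-dependent solution rather than about the individual spatial slices. The underlying fact is the commutativity of \eqref{eq.diagram}, which gives $\widehat D_{n+1}(s_{\phi_n(u)})=\widehat D_{n+1}(\phi_n(s_u))=\phi_n(\widehat D_n(s_u))$, where prolongation commutes with $\phi_n$ as described before Definition~\ref{def:jet-embedding}.

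For the $\phi_n'$ version, I would write $\phi_n'$ as pullback along $\pi_n'=h\circ\pi_n\circ g$ with $g\in\Eucl(n+1)$ and $h\in\Eucl(n)$. Then $\phi_n'(w)=g\cdot\phi_n(h\cdot w)$, with the actions by pullback. I would apply Lemma~\ref{lem:operator-symmetry} twice. First, $\mathcal{S}^t_{F_n}(h\cdot u_0)=h\cdot\mathcal{S}^t_{F_n}(u_0)$. Then, using the unprimed case, $\mathcal{S}^t_{F_{n+1}}(g\cdot\phi_n(h\cdot u_0))=g\cdot\phi_n(\mathcal{S}^t_{F_n}(h\cdot u_0))=g\cdot\phi_n(h\cdot \mathcal{S}^t_{F_n}(u_0))=\phi_n'(\mathcal{S}^t_{F_n}(u_0))$. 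Alternatively, one could invoke the second clause of Lemma~\ref{lem:sln_embeddings} directly and repeat the uniqueness argument, which avoids equivariance entirely.

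The main obstacle is making the uniqueness hypothesis apply in dimension $n+1$. There are two issues. First, the uniqueness theorem must hold for $F_{n+1}$, not only for $F_n$, so I would read the hypothesis as a uniqueness theorem for the whole family. Second, the domain changes: $U$ becomes $U\times\R$, which is unbounded in the new direction, and $\phi_n(u)$ does not decay in $x_{n+1}$. Many uniqueness theorems assume decay, as the heat equation on $\R^{n+1}$ illustrates, where uniqueness needs a growth condition. I would handle this by assuming the uniqueness class is stable under $\phi_n$. For instance, if uniqueness holds among solutions with some growth bound in $\|x\|$, that bound is preserved because $\|\pi_n x\|\le\|x\|$. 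I would state explicitly that boundary conditions on $\partial U$ lift to $\partial U\times\R$ via $\phi_n$ in the same way.
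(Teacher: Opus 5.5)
Your proposal is correct and follows the paper's own proof. Lemma~\ref{lem:sln_embeddings} shows that $\phi_n(\mathcal{S}^t_{F_n}(u_0))$ solves $F_{n+1}=0$ with initial data $\phi_n(u_0)$, uniqueness then identifies it with $\mathcal{S}^t_{F_{n+1}}(\phi_n(u_0))$, and the $\phi_n'$ case is reduced to this one via the equivariance of Lemma~\ref{lem:operator-symmetry}. You also make two valid points that the paper's one-line appeal to uniqueness leaves implicit: uniqueness is actually needed for $F_{n+1}$, and it must hold on a solution class that contains the non-decaying lift $\phi_n(u)$.
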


\begin{remark}[Perturbations] If a solution operator satisfies a Lipschitz condition, it is possible to control the error between solutions arising from exact- and slightly perturbed compatible embeddings. See \cref{ap:compatibility}.
\end{remark}

\begin{remark}[Vector-valued PDEs] A similar proof strategy could be followed for vector-valued evolution PDEs, where the `lift' and `projection' operations apply to each vector individually. In that case $u$ is vector-valued, and the right-hand-side $D_n$ of the PDE is equivariant with respect to $\Orth(n)$.
\end{remark}

Thus, for $(n+1)$-dimensional initial conditions that are a $\phi_n$ embedding of $n$-dimensional initial conditions, the solution operator $\mathcal{S}_{F_{n+1}}^t$ of a higher-dimensional problem can be identified with that of a lower one $\mathcal{S}_{F_n}^t$, after an appropriate projection $\pi_n$. 
This \textit{does not} give us a strong equivalence between operators in different dimensions. Instead, it suggests that the parametrization for a lower solution operator can be transferred to a higher-dimensional space \textit{while agreeing with the true high dimensional solution operator} for a subset of initial data. 
Thus, it provides a possible warm-start strategy for learning high-dimensional operators, which can be `initialized' by training in lower dimensions. We discuss this approach theoretically and implement it computationally in the following sections.

\section{Lifting PDE solvers to higher dimensions}

In this section, we extend the ideas for transferring PDE solution operators to transferring machine learning models that emulate solution operators.
In place of continuous fields, the dynamics are modeled in a grid where each pixel can have scalar, (pseudo-)vector, and (pseudo-)tensor features that discretize the corresponding fields. 
To this end, we consider the models from \cite{gi_net}, which extend equivariant convolutional neural networks to vector and tensor images. 

\subsection{Geometric convolutional neural networks}\label{subsec:ginjax}
We first define the space of (pseudo-)vectors and tensors, which will correspond to the features at each pixel (Definition \ref{def:tensors}). We then define the tensor images (Definition \ref{def:tensor_image}).

\begin{defn}[\tensors{k}{p}] \label{def:tensors}
    A \textit{\tensor{1}{p}} is an element of $\mathbb R^n$ equipped with the action of $\Orth(n)$
    \begin{align} \label{eq.action}
    g\cdot v = \det(M(g))^{\frac{1-p}{2}}\,M(g)\,v,
\end{align}
    where the parity $p$ is either $+1$ for vectors or $-1$ for pseudovectors. 
    Given $k$ \tensors{1}{p_i} denoted $v_i$, then $T:= v_1 \otimes \ldots \otimes v_k$ is a \textit{rank-1 \tensor{k}{p}}, where $p=\prod_{i=1}^k p_i$ and the action of $\Orth(n)$ is defined as 
    \begin{equation}
        g \cdot \qty(v_1 \otimes \ldots \otimes v_k) = (g \cdot v_1) \otimes \ldots \otimes (g \cdot v_k) \,.
    \end{equation}
    Thus a tensor $T$ is an element of a vector space $(\mathbb{R}^n)^{\otimes k}$, which we denote $\mcT_k\qty(\bbR^n,p)$.
    To get higher rank tensors, we can add tensors of the same order $k$ and parity $p$, and the action of $\Orth(n)$ extends linearly. 
\end{defn}

To get a discretized version of the scalar, vector, and tensor fields in the previous sections, we construct a tensor image, a grid or lattice of tensors.

\begin{defn}[\tensor{k}{p} image]\label{def:tensor_image} 
    Consider images supported in $\ZNZ{N}{n}$, the $n$-dimensional discrete torus.
    Then a \emph{\tensor{k}{p} image} is a function $A:\ZNZ{N}{n} \to \mcT_k\qty(\bbR^n, p)$.
    The vector space of \tensor{k}{p} images with side lengths $N$ is denoted $\mathcal{A}_{N,n,k,p}$.
\end{defn}

Instead of being equipped with the action of the continuous $\Eucl(n), \mathcal{A}_{N,n,k,p}$ is equipped with the action of the discrete group $G_n$, the semidirect product of discrete translations $\ZNZ{N}{n}$ and $B_n$, the group of rotations of 90 degrees and reflections.
Using tensor images, \cite{gi_net} defines geometric convolutional neural networks analogous to standard convolutional networks.
The geometric convolution, denoted $A * C$ for input image $A$ and filter $C$, is the same as standard image convolution except that the image-filter pixel multiplication is implemented as a tensor product, followed by a tensor contraction. 
These geometric convolutions characterize linear $G_d$-equivariant functions, and hence preserve the output as a tensor image, if the filters are symmetric with respect $B_d$.
See \Cref{ap:tensor_image_background} for the mathematical details, and see \Cref{fig.GINet} for an illustration.
Further layers such as point-wise nonlinearities and max pooling can be defined for tensor images (\Cref{ap:tensor_image_background}).

\begin{figure}
    \centering
    \begin{minipage}{0.37\textwidth}
    \includegraphics[width=\textwidth]{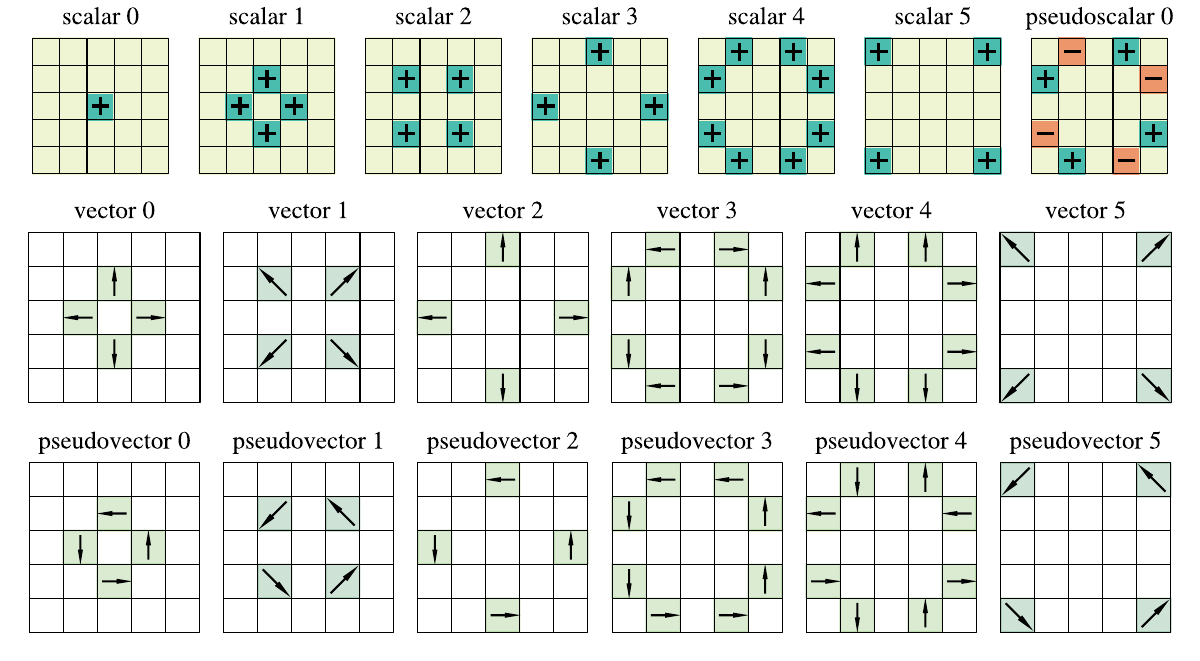}
    \end{minipage}
    \begin{minipage}{0.62\textwidth}
    \includegraphics[width=\textwidth]{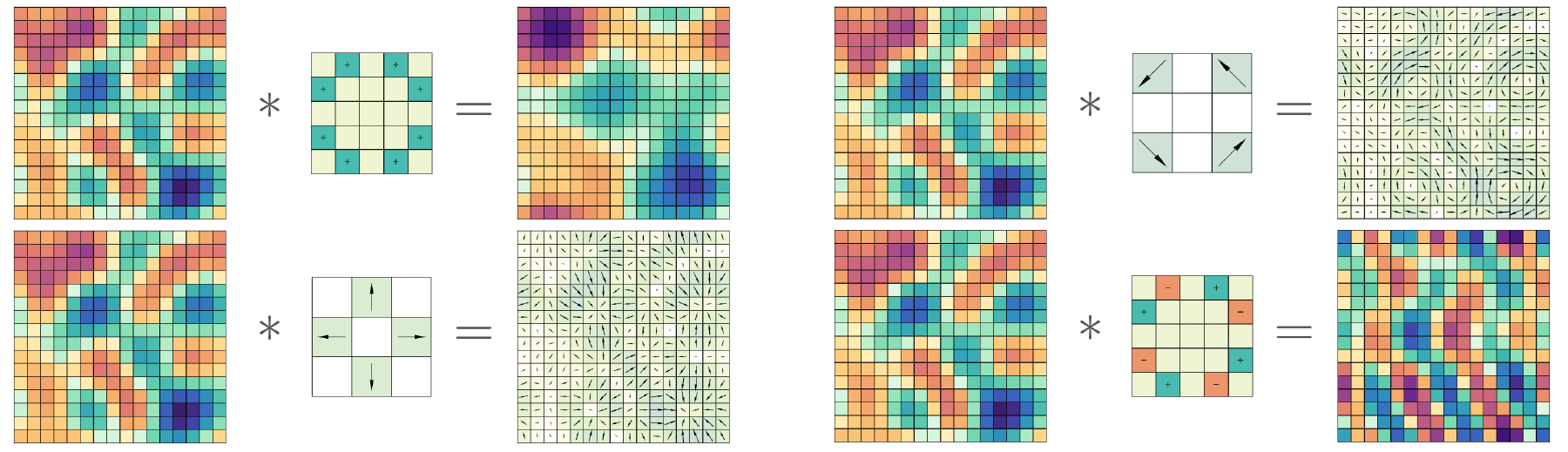}
    \end{minipage}
    \vspace{-.3cm}
    \caption{(Left) The 5x5 scalar, pseudoscalar, vector, and pseudovector symmetric filters for 2-D images. (Right) Convolution of a scalar image with different geometric filters. The convolution with a scalar filter can be seen as the discretization of a diffusion operator, and the convolution with the vector filter approximates a discretization of the gradient. In general any symmetric discretization of any coordinate-free operator (gradient, divergence, curl) could be expressed in terms of these symmetric filters.}
    \label{fig.GINet}
    \vspace{-.5cm}
\end{figure}

\subsection{Embedding of tensor images}
It is straightforward to extend the projection $\pi_n$ and embedding $\phi_n$ to the tensor image case.
\begin{defn}\label{def:tensor_projection}
    The \emph{tensor projection} $\pi_n:\mcT_k\qty(\bbR^{n+1},p) \to \mcT_k\qty(\bbR^n,p)$ is defined for all $a \in \mcT_k\qty(\bbR^{n+1},p)$ as $\qty[\pi_n(a)]_{i_1,\ldots,i_k} := \qty[a]_{i_1,\ldots,i_k}$, for $i_1,\ldots,i_k$ ranging from $1$ to $n$.
     The pseudoinverse will be the zero padding map $\tilde \phi_n:\mcT_k\qty(\bbR^n,p) \to \mcT_k\qty(\bbR^{n+1},p)$, analogous to Definition \ref{def:jet-embedding}.
\end{defn}

\begin{defn}[Tensor image embedding]\label{def:tensor_image_embedding}
    Let $A \in \mathcal{A}_{N,n,k,p}$.
    Then the \emph{tensor image embedding from $n$ to $n+1$} of $A$, denoted $\phi_n:\mathcal{A}_{N,n,k,p} \to \mathcal{A}_{N,n+1,k,p}$, is defined for all $\bar\imath_{n+1} \in \ZNZ{N}{n+1}$ as $\phi_n(A)(\bar\imath_{n+1}) = \tilde{\phi}_n\qty(A\qty(\pi_n\qty(\bar\imath_{n+1})))$.
    The pseudoinverse $\rho_n: \mathcal{A}_{N,n+1,k,p} \to \mathcal{A}_{N,n,k,p}$ takes an $n$-dimensional slice: $\rho_n(A')(\bar\imath_n) = \pi_n\qty(A'\qty(\tilde{\phi}_n(\bar\imath_n)))$, for $A' \in \mathcal{A}_{N,n+1,k,p}$ and all $\bar\imath_n \in \ZNZ{N}{n}$.
\end{defn}
We say that a sequence of functions $\{f_n\}$ are compatible with the embeddings $\{\phi_n\}$ if the following diagram commutes:
\begin{equation}\label{eq:tensor_image_compat}
    \begin{tikzcd}
        \dots \arrow[r, hookrightarrow, "\dots"] & \mathcal{A}_{N,n,k,p} \arrow[r, hookrightarrow,
        "\phi_{n}"] \arrow[d, "f_{n}"] & \mathcal{A}_{N,n+1,k,p} \arrow[r,
        hookrightarrow, "\phi_{n+1}"] \arrow[d, "f_{n+1}"] & \mathcal{A}_{N,n+2,k,p}
        \arrow[r, hookrightarrow, "\dots"] \arrow[d, "f_{n+2}"] & \dots \\ \dots
        \arrow[r, hookrightarrow, "\dots"] & \mathbb \mathcal{A}_{N,n,k',p'} \arrow[r,
        hookrightarrow, "\phi_n"] & \mathbb \mathcal{A}_{N,n+1,k',p'} \arrow[r,
        hookrightarrow, "\phi_{n+1}"] & \mathbb \mathcal{A}_{N,n+2,k',p'} \arrow[r,
        hookrightarrow, "\dots"] & \dots
    \end{tikzcd}
\end{equation}

We consider our entire neural networks as the functions $f_n$, and we enforce overall compatibility by requiring compatibility of every layer in the network.
The following theorem specifies when compatibility with respect to \eqref{eq:tensor_image_compat} holds for geometric convolutions with a particular filter.
\begin{theorem}[Geometric convolution compatibility]\label{thm:geometric_convolution_compat}
    Let $C \in \mathcal{A}_{M,n,k+k',p\,p'}$ and $C' \in \mathcal{A}_{M,n+1,k+k',p\,p'}$ be convolution filters.
    Let the functions $f_n:\mathcal{A}_{N,n,k,p} \to \mathcal{A}_{N,n,k',p'}$ and $f_{n+1}:\mathcal{A}_{N,n+1,k,p} \to \mathcal{A}_{N,n+1,k',p'}$ be defined respectively as $f_n(A) = A*C$ for all $A \in \mathcal{A}_{N,n,k,p}$ and $f_{n+1}(A')=A'*C'$ for all $A' \in \mathcal{A}_{N,n+1,k,p}$.
    Then $f_n$, $f_{n+1}$ are compatible with respect to \eqref{eq:tensor_image_compat} if
    \begin{equation}\label{eq:filter_scaling_single_c}
         C(\bar\imath_n) = \sum_{\bar\imath_{n+1} \in \pi_n^{-1}\qty(\bar\imath_n)} \pi_n\qty(C'\qty(\bar\imath_{n+1})) ~,
    \end{equation}
    for all $\bar\imath_n \in \ZNZ{N}{n}$, where $\pi_n^{-1}(\bar\imath_n)$ is the pre-image of the tensor projection to $\bar\imath_n$.
\end{theorem}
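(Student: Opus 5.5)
We verify directly that $\phi_n(A*C) = \phi_n(A)*C'$ pixelwise for every $A\in\mathcal{A}_{N,n,k,p}$. We write out the geometric convolution explicitly:
\[
(A*C)(\bar\imath) = \sum_{\bar a} \iota\big(A(\bar\imath+\bar a)\otimes C(\bar a)\big),
\]
where $\bar a$ ranges over the filter support (centered offsets) and $\iota$ is the contraction of the $k$ indices of $A$ against the first $k$ indices of $C$. The exact convention (translation by $\bar\imath+\bar a$ versus $\bar\imath-\bar a$, which indices are contracted) is fixed in \Cref{ap:tensor_image_background}; the argument below does not depend on it beyond linearity and the index-by-index form of the contraction.

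\textbf{Evaluate the right-hand side.} Fix $\bar\imath_{n+1}\in\ZNZ{N}{n+1}$. Then
\[
(\phi_n(A)*C')(\bar\imath_{n+1}) = \sum_{\bar a_{n+1}} \iota\Big(\tilde\phi_n\big(A(\pi_n(\bar\imath_{n+1})+\pi_n(\bar a_{n+1}))\big)\otimes C'(\bar a_{n+1})\Big),
\]
using that $\pi_n$ on pixel indices is a group homomorphism of the torus. We split the sum over $\bar a_{n+1}$ into fibers $\pi_n^{-1}(\bar a_n)$, grouping terms by $\bar a_n = \pi_n(\bar a_{n+1})$. Within a fiber the $A$-factor is constant, so by bilinearity of $\otimes$ and linearity of $\iota$ the inner sum becomes $\iota\big(\tilde\phi_n(A(\cdot))\otimes \sum_{\pi_n^{-1}(\bar a_n)} C'(\bar a_{n+1})\big)$.

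\textbf{Key tensor identity (the main step).} We need the following: for $a\in\mcT_k(\R^n,p)$ and $c\in\mcT_{k+k'}(\R^{n+1},pp')$,
\[
\iota\big(\tilde\phi_n(a)\otimes c\big) \;\text{has its } \{1,\dots,n\}\text{-indexed components equal to}\; \iota\big(a\otimes \pi_n(c)\big).
\]
This holds because $\tilde\phi_n(a)$ vanishes whenever any index equals $n+1$, so the contracted indices effectively range only over $1,\dots,n$. Restricting the free indices to $1,\dots,n$ then gives exactly $\iota(a\otimes\pi_n(c))$.

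\textbf{The obstacle.} The difficulty is the remaining components, those with some free index equal to $n+1$. For the left-hand side to equal $\phi_n(A*C)$, these must vanish, since $\tilde\phi_n$ zero-pads. They are contractions of $\tilde\phi_n(a)$ against entries of the summed filter $\sum_{\pi_n^{-1}(\bar a_n)} C'$ having an output index equal to $n+1$ and all contracted indices in $\{1,\dots,n\}$. I would first try to show these vanish by symmetry. Because $C'$ is $B_{n+1}$-invariant, the reflection $x_{n+1}\mapsto -x_{n+1}$ maps each fiber $\pi_n^{-1}(\bar a_n)$ to itself. It therefore fixes the fiber sum, while it negates every component with an odd number of $(n+1)$-indices. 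This handles the case of a single output index equal to $n+1$. For components with an even number of $(n+1)$-indices, I expect that either the statement implicitly restricts to this situation, or one must combine with the permutation symmetries in $B_{n+1}$. If neither argument closes this case, I would flag it as an extra hypothesis on $C'$ needed for the theorem.

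\textbf{Conclude.} Once the $(n+1)$-index components are handled, the remaining $\{1,\dots,n\}$ components are
\[
\sum_{\bar a_n}\iota\Big(A(\pi_n(\bar\imath_{n+1})+\bar a_n)\otimes \sum_{\pi_n^{-1}(\bar a_n)}\pi_n(C'(\bar a_{n+1}))\Big).
\]
By \eqref{eq:filter_scaling_single_c}, the inner fiber sum equals $C(\bar a_n)$, so this expression is $(A*C)(\pi_n(\bar\imath_{n+1}))$. Applying $\tilde\phi_n$ gives $\phi_n(A*C)(\bar\imath_{n+1})$, which is the required commutativity of \eqref{eq:tensor_image_compat}.
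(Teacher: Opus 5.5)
Your computation follows the same route as the paper's proof. You expand the convolution pixelwise, use that $\phi_n(A)$ is constant along fibers of $\pi_n$, group the offsets $\bar a_{n+1}$ by $\pi_n(\bar a_{n+1})$, and substitute \eqref{eq:filter_scaling_single_c}. Your identity for the components with all indices in $\{1,\dots,n\}$ is correct.

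The obstacle you isolate is genuine, and it is exactly where the paper's proof goes wrong. In step \eqref{eq:remove_psi_psi_pinv} the paper deletes $\tilde\phi_n\pi_n$ from $\tilde\phi_n\pi_n\big(\phi_n(A)(\bar\imath_{n+1}-c(\bar a_{n+1}))\otimes C'(\bar a_{n+1})\big)$, citing $\tilde\phi_n\pi_n\tilde\phi_n=\tilde\phi_n$. But the $k'$ uncontracted indices belong to $C'(\bar a_{n+1})$, which is not zero-padded, so the components you worry about are silently dropped.

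Your even case cannot be closed, because the theorem is false there. Take $k=0$, $k'=2$, $p=p'=+1$, and let $C'$ equal $I_{n+1}$ at the center pixel and $0$ elsewhere; this filter is $B_{n+1}$-invariant. Then \eqref{eq:filter_scaling_single_c} forces $C$ to be $I_n$ at the center and $0$ elsewhere. Hence $\phi_n(A*C)(\bar\imath_{n+1})=A(\pi_n(\bar\imath_{n+1}))\,\tilde\phi_n(I_n)$, while $(\phi_n(A)*C')(\bar\imath_{n+1})=A(\pi_n(\bar\imath_{n+1}))\,I_{n+1}$. These differ in the $(n+1,n+1)$ entry whenever $A\neq 0$.

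Your odd case also has two caveats. First, the statement as written assumes no invariance of $C'$, and without it even $k'=1$ fails: with $k=0$ and $C'=e_{n+1}$ at the center, you get $C=0$ but $\phi_n(A)*C'\neq 0$. Second, your reflection argument tacitly uses $pp'=+1$. For pseudotensor filters the factor $\det=-1$ reverses which components are killed. For example, take $n=1$, $k=0$ and the $B_2$-invariant pseudovector filter $C'(\bar a)=J\bar a$, with $J$ the rotation by $90^\circ$. Its fiber sums are $(0,3\bar a_1)$, so $C\equiv 0$ while $\phi_1(A)*C'\neq 0$.

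The correct statement needs an extra hypothesis on each fiber sum $S(\bar a_n)=\sum_{\bar a_{n+1}\in\pi_n^{-1}(\bar a_n)}C'(\bar a_{n+1})$. It must have vanishing components whenever its first $k$ indices are at most $n$ and one of its last $k'$ indices equals $n+1$. Together with \eqref{eq:filter_scaling_single_c}, this hypothesis is also necessary once $N\ge M$, as you can see by testing on images supported at a single pixel. With it, your argument is a complete proof. Your reflection argument verifies it when $k'=0$, and when $k'=1$, $pp'=+1$ and $C'$ is invariant under $x_{n+1}\mapsto -x_{n+1}$. That covers scalar- and vector-output layers with positive-parity filters.
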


The proof is given in \Cref{ap:compat_tensor_image_functions}. 
Since tensor images form a vector space, we can think of either side of \eqref{eq:filter_scaling_single_c} as a linear combination of filters, and the same result holds (see Corollary \ref{cor:linear_combination_of_filters_compat}).
It is additionally straightforward to show that all other common layers in the network (point-wise nonlinearities, max pooling, etc.) are compatible with our embedding $\phi_n$ (\Cref{ap:compat_tensor_image_functions}).

\subsection{Algorithm for warm-starting PDEs}\label{sec:warm-start_alg}

The previous section proves conditions under which the various layers of a CNN exactly satisfy compatibility with the tensor image embedding \eqref{eq:tensor_image_compat}.
This would be the conclusion to the story if we expected all our higher dimensional images to look like the embedding: $N$ copies of the lower dimensional image stacked on top of each other.
This is unlikely to be true in practice, so we would like our lifted models to work for generic initial conditions in higher dimensions.
In this section we will provide an algorithm for transferring filters and their coefficients in a way that satisfies compatibility \eqref{eq:filter_scaling_single_c}, while simultaneously achieving better performance on generic input images.

Let $A'$ be an $(n+1)$-dimensional image, let $C'$ be an $(n+1)$-dimensional filter, and let $C$ be an $n$-dimensional filter.
We will say $C'$ and $C$ are close if the convolution $A' * C'$ followed by taking any $n$-dimensional slice is close to taking the same $n$-dimensional slice of $A'$, then convolving with $C$.
We combine $\rho$ with a random isometry to get a random slice.
This random slice projection is similar to the 2D to 3D embedding used in the Walrus model \citep{mccabe2025walruscrossdomainfoundationmodel}.

\begin{defn}[Warm-start coefficients]\label{def:warmstart_coefficients}
    Let $C_0,\ldots,C_{L_n} \in \mathcal{A}_{M,n,k,p}$ be a basis of $B_n$-invariant filters with coefficients $\alpha_0,\ldots,\alpha_{L_n}$ and let $C = \sum_{i=0}^{L_n} \alpha_i C_i$.
    Let $C'_0,\ldots,C'_{L_{n+1}} \in \mathcal{A}_{M,n+1,k,p}$ be a basis of $B_{n+1}$-invariant filters, and let $\mathcal{A}'$ be some distribution of $(n+1)$-dimensional images.
    Then the \emph{warm-start coefficients for} $\mathcal{A}'$ are the coefficients $\alpha'_0,\ldots,\alpha'_{L_{n+1}}$ with $C' = \sum_{i=0}^{L_{n+1}} \alpha'_i C'_i$ that satisfy
    \begin{equation}\label{eq:warmstart_coefficients}
        \argmin_{\alpha'_i,\ldots,\alpha'_{L_{n+1}}} \underset{h \in G_{n+1}}{\mathbb E}\qty[\underset{A' \sim \mathcal{A}'}{\mathbb E}\qty[ \mathcal{L}\qty(\qty(\rho_n\qty(h \cdot A') \ast C) - \rho_n\qty(h \cdot \qty(A' \ast C' )))]] ~,
    \end{equation}
    subject to the constraint that $C,C'$ satisfy the compatibility condition \eqref{eq:tensor_image_compat}.
    $\mathcal{L}$ is the MSE loss.
\end{defn}

Equation \ref{eq:warmstart_coefficients} can be thought of as a soft compatibility constraint, whereas \eqref{eq:tensor_image_compat} is a hard constraint.
Solving \eqref{eq:warmstart_coefficients} exactly for particular distributions of input images $\mathcal{A}'$ could be an involved problem by itself, further complicated by the fact that we would need to solve this problem for each layer of our network.
For a warm start however, we only require a reasonable starting guess from which we will further train our network end-to-end on fine-tuning data.
Thus we consider $\mathcal{A}'$ as a distribution of scalar images with simplifying assumptions, and we derive a convenient form of \eqref{eq:warmstart_coefficients} to solve.

\begin{prop}\label{prop:simple_argmin_objective}
    Let the filters and coefficients be as defined in \Cref{def:warmstart_coefficients}.
    If $\mathcal{A}'$ is the distribution of scalar images with independent identically distributed (iid) pixels with $\underset{A' \sim \mathcal{A}'}{E}\qty[A'(\bar\imath)] = 0$ and $\underset{A' \sim \mathcal{A}'}{E}\qty[A'(\bar\imath)^2] = 1$, then \eqref{eq:warmstart_coefficients} is equivalent to
    \begin{equation}\label{eq:nepo_coefficients}
        \argmin_{\alpha'_0,\ldots,\alpha'_{L_{n+1}}} \mathcal{L}\qty(\pi_n\qty(\varphi_n(C) - C')) ~,
    \end{equation}
    subject to the constraint that $C,C'$ satisfy the compatibility condition \eqref{eq:tensor_image_compat}.
\end{prop}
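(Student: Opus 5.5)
Fix $h \in G_{n+1}$. The key observation is that both quantities inside $\mathcal{L}$ are linear in the random image $A'$. We then use the iid, mean-zero, unit-variance assumption to turn the expected MSE into a squared norm of a filter difference.

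\textbf{Step 1: the expression is linear in $A'$.}
For scalar images, $\rho_n(h\cdot(A'*C')) = \rho_n((h\cdot A') * (h\cdot C'))$ by equivariance of geometric convolution. Since $C'$ is a linear combination of $B_{n+1}$-invariant filters, $h\cdot C' = C'$ for the $B_{n+1}$ part. The translation part commutes with convolution. So we may replace $h\cdot A'$ by a new image $B'$. Because the distribution $\mathcal{A}'$ is iid across pixels, it is invariant under $G_{n+1}$, so $B'$ has the same law as $A'$. The outer expectation over $h$ therefore becomes trivial, and it suffices to treat $h=e$.

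\textbf{Step 2: express the slice of the full convolution.}
At a slice pixel $\bar\imath_n$, write
\[
\rho_n(A'*C')(\bar\imath_n) = \sum_{\bar\jmath_{n+1}} A'(\tilde\phi_n(\bar\imath_n)+\bar\jmath_{n+1})\, C'(\bar\jmath_{n+1}),
\]
ignoring tensor contractions, since the input is scalar and output components decouple. Similarly,
\[
(\rho_n(A')*C)(\bar\imath_n) = \sum_{\bar\jmath_n} A'(\tilde\phi_n(\bar\imath_n+\bar\jmath_n))\, C(\bar\jmath_n).
\]
Extend $C$ to $(n+1)$ dimensions by zero-padding off the slice, i.e. $\varphi_n(C)(\bar\jmath_{n+1}) = \tilde\phi_n(C(\bar\jmath_n))$ if $\bar\jmath_{n+1}=\tilde\phi_n(\bar\jmath_n)$ and $0$ otherwise. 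I take this to be the meaning of $\varphi_n$. The difference is then
\[
\sum_{\bar\jmath_{n+1}} A'(\tilde\phi_n(\bar\imath_n)+\bar\jmath_{n+1})\,\pi_n\bigl(\varphi_n(C)-C'\bigr)(\bar\jmath_{n+1}),
\]
where $\pi_n$ accounts for keeping only output tensor components with indices in $1,\dots,n$, as in $\rho_n$.

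\textbf{Step 3: compute the expected squared norm.}
Square the difference and take expectations. The cross terms vanish because the pixels are independent with mean zero; the diagonal terms have unit variance. Hence the expected squared error at each slice pixel equals $\|\pi_n(\varphi_n(C)-C')\|^2$, summed over filter offsets. This value does not depend on $\bar\imath_n$, so averaging over the $N^n$ slice pixels gives exactly $\mathcal{L}(\pi_n(\varphi_n(C)-C'))$, up to a positive constant normalization. Therefore the argmin over $\alpha'$ is unchanged, and the constraint set is identical in both problems.

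\textbf{Main obstacle.}
The main obstacle is bookkeeping rather than ideas. One must check that the tensor contraction in the geometric convolution, with scalar input, reduces to componentwise filter weighting, so that $\pi_n$ on the output commutes as claimed. One must also check that the filter supports (size $M$) sit inside the torus without wraparound collisions, so that the offsets $\bar\jmath_{n+1}$ index distinct pixels. I would handle this by assuming $M \le N$ and writing the convolution explicitly in coordinates for a $k'$-tensor output.
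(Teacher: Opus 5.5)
Your proposal is correct and follows essentially the same route as the paper's proof. You remove $h$ using equivariance, the $B_{n+1}$-invariance of $C'$, and the $G_{n+1}$-invariance of the iid law; you rewrite $\rho_n(A')*C$ as $\rho_n(A'*\varphi_n(C))$; and you expand the squared norm so the cross terms vanish and the diagonal terms give $\sum_{\bar a}\|\pi_n((\varphi_n(C)-C')(\bar a))\|^2$. Your remarks that one needs $M\le N$ (so filter offsets do not collide on the torus) and that the final equality holds only up to a positive normalization constant are details the paper leaves implicit.
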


Note that $\pi_n$ is acting on tensor images here by projecting all the pixels, and $\varphi_n$ embeds an $n$-dimensional filter in the center slice of an $(n+1)$-dimensional filter with zeros everywhere else.
The proof and full details are given in \Cref{app.lifting_filters}.
We can now describe the full algorithm.
We call it the \emph{Nepo Warm-start} because we use our influence and connections, the $n$-dimensional data, to gain a huge advantage for our offspring, the $(n+1)$-dimensional model.
\SetKwComment{Comment}{/* }{ */}
\begin{algorithm}
    \caption{Nepo Warm-start}\label{alg:nepo_warmstart}
    \KwIn{\tensor{k}{p} filters $C_0,C_1,\ldots,C_{L_n}$ in $n$ dimensions and \tensor{k}{p} filters $C'_0,C'_1,\ldots,C'_{L_{n+1}}$ in $n+1$ dimensions for each required tensor order $k$ and parity $p$. Data $\mathcal{D}_n$ in $n$ dimensions and optionally data $\mathcal{D}_{n+1}$ in $n+1$ dimensions.}
    \begin{enumerate}[leftmargin=*]
        \item Train a ginjax model \cite{gi_net} with the $n$-dimensional \tensor{k}{p} filters $C_0,\ldots,C_{L_n}$ on $\mathcal{D}_n$.
        \item For each tensor order $k$ and parity $p$, solve \eqref{eq:nepo_coefficients} to convert the coefficients $\alpha_0,\ldots,\alpha_{L_n}$ for the $n$-dimensional filters  to the coefficients $\alpha'_0,\ldots,\alpha'_{L_{n+1}}$ for the $(n+1)$-dimensional filters.
        \item For each tensor order $k$ and parity $p$, replace the $n$-dimensional filters $C_0,\ldots,C_{L_n}$ with the $(n+1)$-dimensional filters $C'_0,\ldots,C'_{L_{n+1}}$.
        \item Optionally, continue to fine-tune the model on $\mathcal{D}_{n+1}$ data.
    \end{enumerate}
    \KwOut{The $(n+1)$-dimensional ginjax model.}
\end{algorithm}
The second step of \Cref{alg:nepo_warmstart}, computing the coefficients from \eqref{eq:nepo_coefficients}, can be done in advance analytically based on the coefficients for the $n$-dimensional filters.
These values are given in \Cref{tab:conversion_coefficients} in \Cref{app.lifting_filters}.

\section{Numerical experiments}

We demonstrate our method on three PDEs: the heat equation, Burgers' equation, and the compressible Navier--Stokes equations.
For each equation, our ultimate goal is to train a neural network to approximate the solution operator $\mathcal{S}_{F_{n+1}}^t (u_0) \mapsto u_t$. 
To do this, we use the Nepo Warm-start (\Cref{alg:nepo_warmstart}) to train a model on $\mathcal{D}_n$ data, convert the model to work with $\mathcal{D}_{n+1}$ data, and fine-tune on data sets of varying size.
We then compare each of these models with a held-out data set to a baseline model with identical architecture that was only trained on the fine-tuning data.

For the heat equation, we used $128$ 1D pre-training data points, $0,1,4,8,32,$ and $128$ 2D fine-tuning data points, and $128$ 2D test data points.
For Burgers' equation, we used $8$ 2D trajectories of pre-training data, $0,1,4$, and $8$ 3D trajectories of fine-tuning data, and $8$ 3D trajectories of test data.
Each trajectory had $50$ time steps which became $49$ input-output data points.
Finally, for the Navier--Stokes equations, we used $128$ 2D trajectories of pre-training data, $0,1,4,8,$ and $32$ 3D trajectories of fine-tuning data, and $32$ 3D trajectories of test data.
Each trajectory had $21$ time steps which became $17$ input-output data points, using $4$ time steps for input and $1$ time step for the output.
Full data and training details are in \Cref{ap:experimental_details}.

\begin{figure}[h]
    \centering
    \begin{minipage}{0.32\textwidth}
        \includegraphics[width=\textwidth]{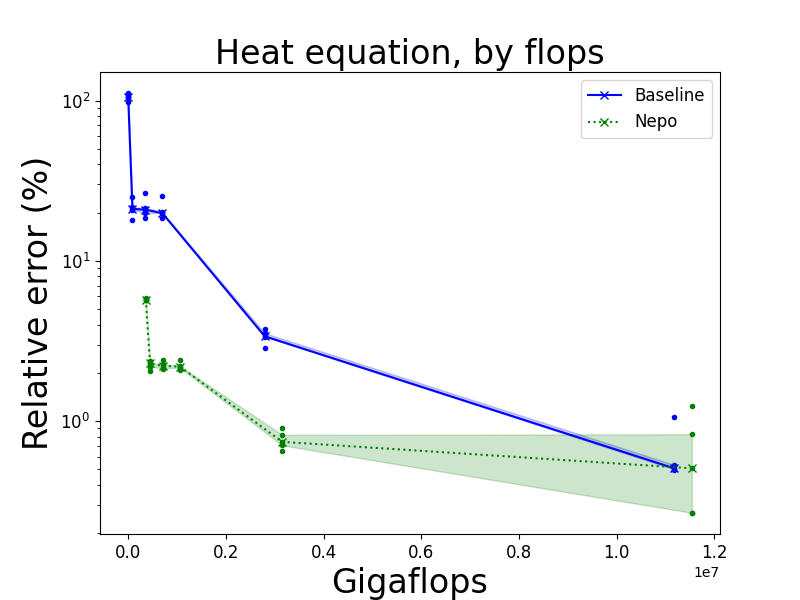}
    \end{minipage}
    \begin{minipage}{0.32\textwidth}
        \includegraphics[width=\textwidth]{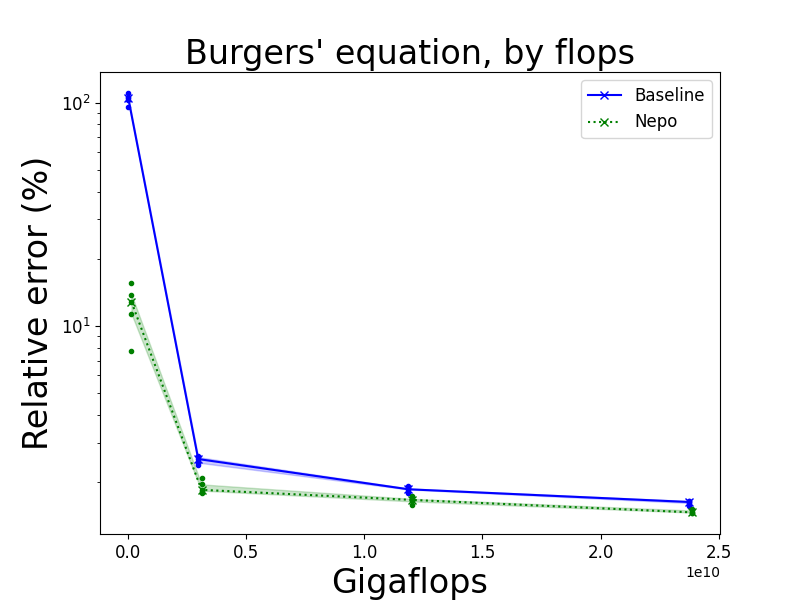}
    \end{minipage}
    \begin{minipage}{0.32\textwidth}
        \includegraphics[width=\textwidth]{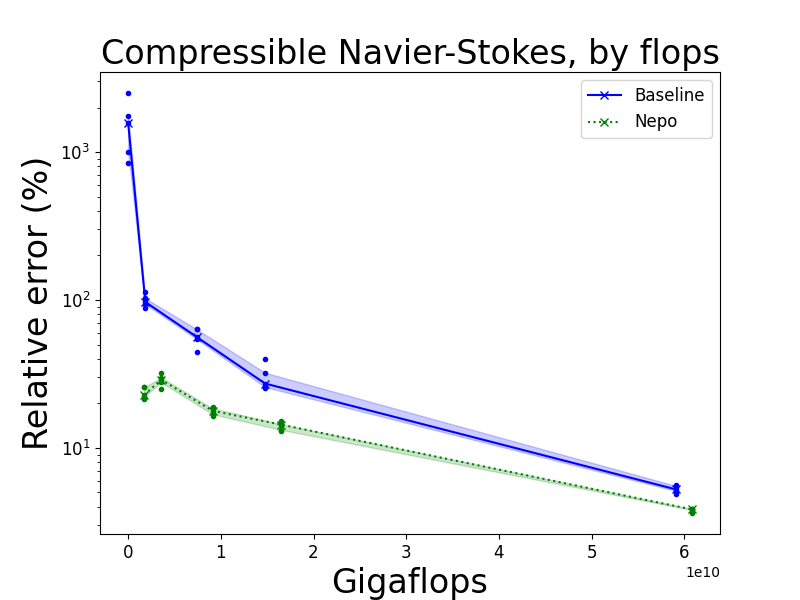}
    \end{minipage}
    \begin{minipage}{0.32\textwidth}
        \centering
        \textbf{(a)}
    \end{minipage}
    \begin{minipage}{0.32\textwidth}
        \centering
        \textbf{(b)}
    \end{minipage}
    \begin{minipage}{0.32\textwidth}
        \centering
        \textbf{(c)}
    \end{minipage}
    \caption{
    Relative error performance of \textbf{(a)} the heat equation, 
    \textbf{(b)} Burgers' equation, and \textbf{(c)} the compressible Navier--Stokes equations over varying numbers of fine tuning points, indexed by the total number of gigaflops to pre-train and fine-tune the models.
    Note the order of magnitude of the x-tick values, and the log values of the y-axis.
    We ran 5 trials, with the lines representing the median value, the shaded region representing the interquartile range, and a dot for each value.
    }
    \label{fig:warm-start_plots_relative_error}
\end{figure}

\begin{table}[h]
    \centering
    \begin{tabular}{cc|cccccc}
        PDE & Models & 0 & 1 & 4 & 8 & 32 & 128 \\
        \toprule
        Heat & Baseline & 105.3\% & 21.0\% & 20.8\% & 19.8\% & 3.3\% & \textbf{0.5\%} \\
        & Nepo & \textbf{5.7\%} & \textbf{2.2\%} & \textbf{2.2\%} & \textbf{2.1\%} & \textbf{0.7\%} & \textbf{0.5}\% \\
        \midrule
        Burgers & Baseline & 105.3\% & 2.5\% & 1.8\% & 1.6\% & Time out & Time out \\
        & Nepo & \textbf{12.8\%} & \textbf{1.8\%} & \textbf{1.6\%} & \textbf{1.4\%} & Time out & Time out \\
        \midrule
        N-S & Baseline & 1,565.7\% & 97.0\% & 56.3\% & 27.2\% & 5.2\% & Time out \\
        & Nepo & \textbf{22.8\%} & \textbf{28.9\%} & \textbf{17.8\%} & \textbf{14.3\%} & \textbf{3.8\%} & Time out \\
    \end{tabular}
    \caption{The median relative error performance over 5 trials of each PDE over varying numbers of fine tuning points.
    N-S refers to compressible Navier--Stokes.
    For the heat equation, the column labels represent the number of fine-tuning points, while for Burgers and Navier--Stokes, they refer to the number of trajectories of 17 and 49 time steps each, respectively.
    The positions in the table labeled ``Time out'' were training loops that would have taken longer than 15 hours per trial.
    With higher precision, the baseline method does better than Nepo for the heat equation by $0.001$\%.
    }
    \label{tab:relative_errors}
\end{table}

The results under the relative error metric are shown in \Cref{fig:warm-start_plots_relative_error}, and the values are in \Cref{tab:relative_errors}.
We first note that the pre-train results are more accurate than the baseline results in 12/13 cases, with a negligible difference in the last case.
The improvement is most dramatic for the most challenging PDE, the compressible Navier--Stokes equations.
The warm-start model with no additional fine-tuning performs better than the baseline model trained with 8 trajectories while using $12\%$ of the flops (\Cref{tab:flops}) and $20\%$ of the total data size.
We see a mild overfitting effect when fine-tuning the Nepo warm-start for Navier--Stokes with 1 training trajectory.
Although pre-training the model requires some additional computational time, in order to achieve comparable performance, the baseline model requires significantly more computational time.

Aside from computational time, memory is often an additional bottleneck in simulating higher dimensions.
When the data is too large to be processed all at once, it can require additional cumbersome strategies, such as splitting the spatial domain into sections to be processed individually \cite{bolton_zanna}.
For the heat equation, a single tensor image in dimension $n$ with side length $N$ requires $\mathcal{O}\qty(N^n)$ memory, while for Burgers' equation and the Navier--Stokes equations, it requires $\mathcal{O}\qty(n\,N^n)$.
Our any-dimensional method therefore provides memory benefits in addition to the computational benefits.

To confirm that \Cref{def:warmstart_coefficients} is an effective method of selecting coefficients for higher dimensional filters, we perform an ablation over other reasonable strategies in \Cref{ap:warmstart_ablation}.
Finally, additional plots and a table with the $l_2$ error are shown in \Cref{ap:additional_results} along with a table of the flops values.

\section{Discussion and limitations}

Inspired by the graph neural network transferability literature, this paper focuses on the transferability of PDE solvers across ambient dimensions. 
One key aspect for transferability in this framework is the existence of symmetries that maintain the expressive power as the dimension increases. 
In this work we assume that the right hand side of the PDE we are solving is orthogonally symmetric, but this could be changed to other symmetries (permutations or Lorentz for example) with further theoretical development.
However, some amount of symmetry in the equations will be an unavoidable assumption for this framework to apply. 
This assumption covers many applications of interest because any PDEs arising from the laws of physics will naturally have these symmetries.
Another limitation of this work is that it only focuses on flat geometries. 
A priori we do not know whether the approach could be adapted to general manifolds. 

Finally, the main limitation is that the conditions under which the solver generalizes exactly correspond to a subspace where the problem is basically a $n$-dimensional problem in $(n+1)$-dimensions. 
That is also the case in the graph neural networks setting, but they can show that difference goes to zero as the number of nodes in the graph goes to infinity. 
In our PDE application, it is impractical to consider the dimension growing to infinity, but we show that the solution in $n$ dimensions can be used as a viable warm-start for $n+1$ dimensions in practice. 
Our results numerically demonstrate that this approach gives a significant computational advantage for many problems in many settings.  

\subsection*{AI use statement}
We only used AI to produce the figure next to Definition \ref{def:embed-Cinfty}.

\subsection*{Reproducibility statement}

The details to reproduce all experiments are provided in \Cref{ap:experimental_details}.
The data sets, described in \Cref{ap:data}, are either publicly available, or generated with open source code.
Our code is open source and available at \url{https://github.com/WilsonGregory/ginjax}.

\subsection*{Acknowledgments}
WG would like to thank David W. Hogg, Mateo Diaz, and Dmitriy (Tim) Kunisky for helpful feedback on this work.
WG was partially supported by the Johns Hopkins Mathematical Institute for Data Science (MINDS) Fellowship during the development of this work. BBS was partially supported by Soledad Villar's NSF CAREER award, NSF CAREER 2339682 and NSF DMS 2031985.
SV was partially funded by NSF–Simons
Research Collaboration on the Mathematical and Scientific Foundations of Deep Learning (MoDL)
(NSF DMS 2031985), NSF CAREER 2339682, and NSF BSF 2430292.

\bibliographystyle{plain}
\bibliography{bibliography}

\newpage
\appendix

\section{Related work}
\label{app.lit_review}

\paragraph{Any-dimensional optimization and machine learning with representation stability.} A recent line of research shows how ideas from representation stability \citep{church2012homological, church2013representation} can be applied to optimization \citep{levin2023free, levin2025any} and machine learning \citep{levin2024any, levin2025transferring}. 
This framework has been recently applied to PDE identification \citep{phan2025pdes}, kernel regression \citep{diaz2025invariant}, and information theory \citep{atay2025poset}.
These ideas in particular explain why finitely-parameterized equivariant machine learning models can have uniform expressivity across growing input sizes. 
In GNNs, the notion of uniform expressivity has been explored with other techniques in~\cite{grohe2021logic, barcelo:hal-03356968, khalife2024uniform, rosenbluth2023some, boker2024fine}.
Beyond learning, representation stability has been used to study limits of a number of mathematical objects, see for example \cite{sam2017grobner, sam2015stability, sam2016gl,conca2014noetherianity, ramos2018families, alexandr2023moment}.
Closest to our work is the recent work \cite{ma2026mu}, which combines ideas of any-dimensional machine learning and $\mu$P hyperparameter transfer to warm-start large models from pretrained smaller models.

\paragraph{Equivariant machine learning.} The theory of any-dimensional machine learning applies to equivariant machine learning models, i.e., neural networks with symmetries imposed. There is a wide class of equivariant models, including the ones expressed in terms of group convolutions~\citep{cohen2016group, kondor2018generalization}, representation theory~\citep{kondor2018n, thomas2018tensor, geiger2022e3nn}, canonicalization~\citep{kaba2023equivariance}, and invariant theory~\citep{villar2021scalars,blum2023machine,villar2023dimensionless, blum2024learning}, among others. In this work, we use the equivariant models on scalars and tensor fields designed to solve PDEs from \cite{gi_net}.

\paragraph{Graph neural network transferability.} 
The GNN transferability literature explains why one can train a GNN on small graphs and generalize to larger graphs \citep{ruiz2021graph, levie2021transferability, levie2024graphon, velasco2024graph, maskey2023transferability}. These results were extended beyond graphs (to all any-dimensional models) in \cite{levin2025transferring}. The key idea is that a GNN implicitly identifies graphs of different sizes as the same object, and this induces a metric that we can use to compare graphs of different sizes under which the GNN is continuous. This allows us to derive size-generalization theoretical guarantees. In this paper, we develop a similar principle for PDE learning across dimensions. 

\section{Proof of Proposition \ref{prop:ortho-sym} }\label{ap:proof_prop_ortho_sym}

Here we prove Proposition~\ref{prop:ortho-sym}. Recall that the proposition states that if an $n$-indexed family of differential equations $D_n$:
\begin{enumerate}
    \item makes diagram~\eqref{eq.diagram} commute as written, with $\phi_n$ the standard projections $\mathbb R^{n+1}\rightarrow\mathbb R^n$, and\label{cond:commutes-with-phi}
    \item the $D_n$ do not depend on the space coordinate $\mathbf x$ and so are functions of $U_{n,k}$ alone, and\label{cond:translation-invariant}
    \item furthermore they are invariant with respect to the natural action of $\Orth(n)$ on $U_{n,k}$,\label{cond:orth-invariant}
\end{enumerate}
then they would also make diagram~\eqref{eq.diagram} commute if the $\phi_n$ were replaced by any other affine-orthogonal projections $\mathbb R^{n+1}\rightarrow\mathbb R^n$. We use this numbering for the hypotheses of the theorem in the proof. The proof itself is routine: the majority of the work is in unwinding definitions.

A notational note: it will be convenient in this proof, in order to avoid a proliferation of parentheses, to be able to write both evaluations and compositions of maps using juxtaposition, as needed. Parentheses are therefore only used when there is a danger of ambiguity, and of necessity in maps that take multiple arguments.

\begin{proof}[Proof of Proposition~\ref{prop:ortho-sym}]
We begin by unwinding definitions to translate the commutativity of diagram~\eqref{eq.diagram} into a statement about the $D_n$, in terms of the explicit maps $\pi_n$ (Definition~\ref{def:embed-Cinfty}) and $\tilde \phi_n$ (Definition~\ref{def:jet-embedding}). Prima facie, the diagram says that $\widehat D_{n+1}\phi_n = \phi_n \widehat D_n$ for all $n$. Applying both sides to an arbitrary $s\in C^\infty(\mathbb R^n,U_{n,k})$ and then evaluating at an arbitrary $\mathbf x \in \mathbb R^n$, we obtain the statement that
\begin{equation}
    (\widehat D_{n+1}(\phi_n s))(\mathbf x) = (\phi_n(\widehat D_n s))(\mathbf x)
\end{equation}
for all such $n$, $s$, $\mathbf x$. Applying the definition first of $\widehat D_{n+1}$ and then of $\phi_n:C^\infty(\mathbb R^n,U_{n,k})\rightarrow C^\infty(\mathbb R^n,U_{n,k})$ on the left side, and the definition first of $\phi_n:C^\infty(\mathbb R^n)\rightarrow C^\infty(\mathbb R^n)$ and then of $\widehat D_n$ on the right, we get
\begin{equation}\label{eq:commutativity-in-terms-of-primitives}
    D_{n+1}(\mathbf x, \tilde\phi_ns\pi_n\mathbf x) = D_n(\pi_n \mathbf x, s\pi_n \mathbf x).
\end{equation}
This equation, quantified over all $n$, $s$, and $\mathbf x$, expresses hypothesis~\ref{cond:commutes-with-phi} above. We now assume this along with hypotheses~\ref{cond:translation-invariant} and \ref{cond:orth-invariant}, and aim to conclude that \eqref{eq:commutativity-in-terms-of-primitives} would also hold with the $\pi_n$ replaced with an arbitrary affine-orthogonal projection $\mathbb R^{n+1}\rightarrow\mathbb R^n$, and a corresponding modification to $\tilde \phi_n$.

An arbitrary affine-orthogonal projection $\pi_n':\mathbb R^{n+1}\rightarrow \mathbb R^n$ has the form $\pi_n'=\pi_n g$, where $g\in \Eucl(n+1)$ is an arbitrary Euclidean-isometric self-map of $\mathbb R^{n+1}$. The action of $\Eucl(n+1)$ on $\mathbb R^{n+1}$ also induces a corresponding action on $U_{n+1,k}$, which we denote by $\tilde g$. Then the induced embedding $\tilde \phi_n':U_{n,k}\hookrightarrow U_{n+1,k}$ has the form $\tilde\phi_n' = \tilde g\tilde \phi_n$. Translations act trivially on the fiber of jet space, so the action of $\Eucl(n+1)$ on $U_{n+1,k}$ factors through $\Orth(n+1)$, and $\tilde g$ can be seen as an element of $\Orth(n+1)$ in its action on $U_{n+1,k}$.

After this setup, the proof is very short. Applying hypothesis~\ref{cond:translation-invariant} to \eqref{eq:commutativity-in-terms-of-primitives}, we can write $D_{n+1}$ and $D_n$ in terms  of the second argument only, so we have
\begin{equation}\label{eq:commutativity-under-translation-invariance}
    D_{n+1}(\tilde \phi_n s\pi_n \mathbf x) = D_n(s\pi_n \mathbf x),
\end{equation}
for all $\mathbf x\in \mathbb R^{n+1}$ and all $s$. For any $g\in \Eucl(n+1)$ (and so any $\pi_n'=\pi_n g$ and corresponding $\tilde\phi_n'=\tilde g\tilde\phi_n$), we thus have
\begin{align}
    D_{n+1}(\mathbf x,\tilde\phi_n's\pi_n'\mathbf x) &= D_{n+1}(\tilde g\tilde\phi_n s\pi_ng\mathbf x)\\
    &= D_{n+1}(\tilde\phi_ns\pi_ng\mathbf x)\\
    &= D_n(s\pi_n g\mathbf x)\\
    &= D_n(\pi_n g\mathbf x, s\pi_n g\mathbf x)\\
    &= D_n(\pi_n'\mathbf x,s\pi_n'\mathbf x),
\end{align}
where the first equality is hypothesis~\ref{cond:translation-invariant} and the definitions of $\pi_n'$ and $\tilde\phi_n'$, the second equality is hypothesis~\ref{cond:orth-invariant}, the third is from \eqref{eq:commutativity-under-translation-invariance} with $g\mathbf x$ in place of $\mathbf x$, the fourth is again hypothesis~\ref{cond:translation-invariant}, and the fifth is the definition of $\pi_n'$. This establishes \eqref{eq:commutativity-in-terms-of-primitives} with $\pi_n'$ in place of $\pi_n$ and $\tilde\phi_n'$ in place of $\tilde \phi_n$, and thus it establishes \eqref{eq.diagram} with the corresponding $\phi_n'$ in place of $\phi_n$, completing the proof.
\end{proof}

\section{Representation stability} \label{app.examples}

Representation stability is a phenomenon first noted in connection with algebraic topology \citep{church2013representation, church2012homological}. First, given a naturally $n$-indexed sequence of groups $G_n$ (such as the symmetric groups $S_n$, the general linear groups $GL(n,\R)$, or the orthogonal groups $\Orth(n)$), there is often a natural way to identify the irreducible representations of $G_n$ with some of the irreducible representations of $G_{n+1}$. Then, given a natural $n$-indexed sequence of $G_n$-representations $V_n$ and $G_n$-equivariant embeddings $V_n\rightarrow V_{n+1}$, it can come to pass under good circumstances that for sufficiently high $n$, the irreducible representations of $G_n$ appearing in $V_n$ are all the same with respect to this identification, and with the same multiplicities, for sufficiently high $n$. Such a sequence of $G_n$ representations $V_n$ is said to be {\em representation stable}. 

The canonical action of $\Orth(n)$ on $\R^n$ is an example of a representation-stable sequence of representations. From general theory, it follows that $E_{n,k}\cong \R \oplus \R^n \oplus \Sym^2(\R^n)\oplus \dots\oplus \Sym^k(\R^n)$ is representation-stable as well. In turn, it follows that the space of homogeneous degree-$d$ polynomial functions on $E_{n,k}$, which is isomorphic (as a representation of $\Orth(n)$) to $\Sym^d(E_{n,k})$, is representation-stable as well. In particular, for all sufficiently high $n$, the multiplicity of the trivial subrepresentation, i.e., the dimension of the space of $\Orth(n)$-invariants inside $\Sym^d(E_{n,k})$, does not depend on $n$. 

In fact, we can identify a basis of invariants in $\Sym^d(E_{n,k})$ for any given $d$, whose description does not depend on $n$. Any element of this basis yields a sequence of $D_n$'s satisfying the compatibility condition, and every such sequence can be described in terms of this basis.

The Laplacian $\Delta u$ is an example of such an invariant. Other examples include $u$ itself and the squared norm of the gradient of $u$ (also known as the Dirichlet energy density of $u$). We illustrate further by describing all of the $\Orth(n)$-invariants on order $2$ jet space that are of degree 1, 2, or 3 as polynomials in $u$ and its derivatives:

\begin{exmp}\label{ex:deg-3-order-2-invars}
    Given a smooth function $u:\R^n\rightarrow \R$, view the gradient $\nabla u$ as a column vector, and denote by $\Hess u$ the Hessian matrix of $u$. Then every nonconstant $\Orth(n)$-invariant polynomial of degree at most 3 in $u$ and its first and second derivatives, is in fact a polynomial in the following:
    \begin{itemize}
        \item $u$ itself
        \item $\Delta u$, the Laplacian
        \item $\nabla u\cdot \nabla u$, the Dirichlet energy density
        \item $e_2(\Hess u)$, the coefficient of the $t^2$ term in the characteristic polynomial $\det(I-t\Hess u)$ of the Hessian
        \item $\nabla u^\top (\Hess u) \nabla u$, the norm of the gradient with respect to the symmetric bilinear form defined by the Hessian
        \item $e_3(\Hess u)$, the coefficient of the $t^3$ term in the characteristic polynomial of the Hessian
    \end{itemize}
\end{exmp}

The important takeaway is that this list of invariants does not depend on $n$ (although some of the list items are zero for very low $n$). Note that the invariants listed here satisfy the compatibility requirement: for any invariant on the list, its instantiation for any $n$ is equal to its instantiation for $n+1$ upon zeroing out the derivatives that depend on $x_{n+1}$, so the diagram in \eqref{eq.diagram} commutes; Proposition~\ref{prop:ortho-sym} then implies it would also commute beginning from a different sequence of orthogonal projections $\pi_n'$.\\

\begin{remark}
    The determinant of the Hessian is an invariant for any $n$. However, unlike the invariants listed in Example~\ref{ex:deg-3-order-2-invars}, it is not naturally identified by the above theory across $n$-values, as it does not satisfy the compatibility condition---in particular, \eqref{eq.diagram} does not commute for this choice $D_n(u):=\det(\Hess u)$ (since its degree depends on $n$). What is true is that for any fixed $j$, the coefficient of the degree $j$ term of the characteristic polynomial, $D_n(u):=e_j(\Hess u)$, satisfies compatibility across values of $n$. When $n<j$, $e_j(\Hess u)$ is zero;  when $n=j$ it is equal to the determinant.
\end{remark}

\section{Tensor Image and Geometric Convolution Background}\label{ap:tensor_image_background}

In this section we will introduce the key concepts of tensor images and geometric convolution.
For narrative flow we will include the definitions for some terms previously defined in \Cref{subsec:ginjax}.

We first define the space of (pseudo-)vectors and tensors, which will correspond to the features at each pixel (Definition \ref{def:tensors_appendix}). We then define the geometric images (Definition \ref{def:tensor_image_appendix}).

\begin{defn}[\tensors{k}{p}] \label{def:tensors_appendix}
    A \textit{\tensor{1}{p}} is an element of $\mathbb R^n$ equipped with the action of $\Orth(n)$
    \begin{align} \label{eq.action_appendix}
    g\cdot v = \det(M(g))^{\frac{1-p}{2}}\,M(g)\,v,
\end{align}
    where the parity $p$ is either $+1$ for vectors or $-1$ for pseudovectors. 
    Given $k$ \tensors{1}{p_i} denoted $v_i$, then $T:= v_1 \otimes \ldots \otimes v_k$ is a \textit{rank-1 \tensor{k}{p}}, where $p=\prod_{i=1}^k p_i$ and the action of $\Orth(n)$ is defined as 
    \begin{equation}
        g \cdot \qty(v_1 \otimes \ldots \otimes v_k) = (g \cdot v_1) \otimes \ldots \otimes (g \cdot v_k) \,.
    \end{equation}
    Thus a tensor $T$ is an element of a vector space $(\mathbb{R}^n)^{\otimes k}$, which we denote $\mcT_k\qty(\bbR^n,p)$.
    To get higher rank tensors, we can add tensors of the same order $k$ and parity $p$, and the action of $\Orth(n)$ extends linearly. 
\end{defn}

The space $\mcT_k\qty(\bbR^n,p)$ naturally has addition and scalar multiplication, but also it has tensor products and tensor contractions.
The tensor product takes a \tensor{k}{p} and a \tensor{k'}{p'} and outputs a \tensor{(k+k')}{p\,p'} whose components are defined as:
\begin{equation}
    [a \otimes b]_{i_1, \ldots, i_{k+k'}} = [a]_{i_1, \ldots i_k} [b]_{i_{k+1},\ldots,i_{k+k'}} ~.
\end{equation}
The tensor contraction $\iota_k$ can be though of as analogous to the matrix trace where axes $i_1,\ldots,i_k$ are paired with $i_{k+1},\ldots,i_{2k}$ and summed over where those axis indices are equal.
Written in Einstein summation notation, the $k$-contraction of a \tensor{(2k+k')}{p} is defined as:
\begin{equation}
    \qty[\contract{k}{a}]_{i_{2k+1},\ldots,i_{2k+k'}} = \qty[a]_{i_1,\ldots,i_{2k+k'}} ~.
\end{equation}
To create a geometric image, we align tensors into a grid or lattice structure.

\begin{defn}[\tensor{k}{p} image]\label{def:tensor_image_appendix} 
    Consider images supported in $\ZNZ{N}{n}$, the $n$-dimensional discrete torus.
    Then a \emph{\tensor{k}{p} image} is a function $A:\ZNZ{N}{n} \to \mcT_k\qty(\bbR^n, p)$.
    The set of \tensor{k}{p} images with side lengths $N$ is denoted $\mathcal{A}_{N,n,k,p}$.
\end{defn}

For the four tensor operations described above---addition, scalar multiplication, tensor product, and tensor contraction---we have analogous definitions for the geometric image where the operation is performed pixel-wise. 
For example, if $A,B \in \mathcal{A}_{N,n,k,p}$, then for all $\bar\imath \in \ZNZ{N}{n}$ we have $(A+B)(\bar\imath) = A(\bar\imath) + B(\bar\imath)$. 

In this space, the geometric convolutional neural networks are analogous to standard convolutional networks, except that the image-filter pixel multiplication is implemented as tensor products, followed by contractions. 
For example, if $n=2$ (i.e. 2D images) the filters are symmetric with respect to 90 degree rotations, and either symmetric ($p=+1$) or anti-symmetric ($p=-1$) with respect to reflections. 
For general dimension $n$, this is the hyperoctahedral group, denoted $B_n$. 

\begin{defn}[Geometric convolution]\label{def:convolution}
    Let $N,M$ be side lengths where $M=2m+1$ is odd, let $A \in \mathcal{A}_{N,n,k,p}$ be the input image, and let $C \in \mathcal{A}_{M,n,k+k',p\,p'}$ be the convolution filter.
    Then the \emph{geometric convolution} is a tensor image $(A \ast C) \in \mathcal{A}_{N,n,k',p'}$ such that:
    \begin{equation}\label{eq:convolution}
        (A\ast C)(\bar\imath) = \sum_{\bar a\in \qty(\bbZ/M \bbZ)^n} \contract{k}{A\qty(\bar\imath - c\qty(\bar a))\otimes C(\bar a)} ~,
    \end{equation}
    where $c:\qty(\bbZ/M \bbZ)^n \to \bbZ^n$ is the pixel centering function $\qty[c(\bar a)]_i = \qty[\bar a]_i - \frac{M-1}{2}$.
    Note that the pixel centering is not on the torus, but $\bar\imath - c(\bar a)$ is on the torus of $\ZNZ{N}{n}$.
\end{defn}

Geometric convolutions characterize linear $G_d$-equivariant functions \cite{gi_net}, but we also need nonlinear functions to build expressive networks.
For this task, we will use the Vector neurons nonlinearity, originally from \cite{deng2021vectorneuronsgeneralframework} and extended to tensors in \cite{gi_net}.

\begin{defn}[Vector Neuron nonlinearity for tensors]\label{def:vnn_for_tensors}
    Let $A_i$ for $i=1,\ldots,r$ be input \tensor{k}{p} image channels.
    Let $\alpha_i \in \bbR$ be learned scalar parameters, let $B=\sum_{i=1}^{r} \alpha_i \, A_i$, and let $\hat{B} = \frac{B}{\norm{B}_2}$, where $\norm{\cdot}_2$ is the typical tensor norm, also known as the Frobenius norm.
    The projection of $A_i$ to $\hat{B}$ is $A_{i,\parallel} := \contract{k}{A_i \otimes \hat{B}}\hat{B}$.
    Finally, let $\tilde{\sigma}$ be a scalar nonlinearity defined on images pixel-wise.
    Then the \emph{Vector Neuron nonlinearity} $\sigma: \qty(\mathcal{A}_{N,n,k,p})^r \to \qty(\mathcal{A}_{N,n,k,p})^r$ for the $i^{th}$ output $\sigma_i$ is defined:
    \begin{equation}\label{eq:nonlinearity}
        \sigma_i\qty(\qty(A_j)_{j=1}^r) = A_i - A_{i,\parallel} + \frac{\tilde{\sigma}\qty(\contract{k}{A_i \otimes \hat{B}})}{\qty|\contract{k}{A_i \otimes \hat{B}}|} A_{i,\parallel}
    \end{equation}
\end{defn}

Additionally, we can define pooling layers such as max pooling, where the max is calculated using the $\ell_2$ norm of a tensor as follows:

\begin{defn}[max\,pool]\label{def:max_pool}
    Let $A \in \mathcal{A}_{N,n,k,p}$ and let $b$ be a positive integer that divides $N$. 
    Then $\text{max\,pool}$ is a function whose output is $\mathcal{A}_{N/b,n,k,p}$ defined for each pixel index $\bar\imath$:
    \begin{equation}\label{eq:max_pool}
        \text{max\,pool}(A,b)(\bar\imath) = A\qty(b\,\bar\imath + \argmax_{\bar a \in \ZNZ{b}{n}}\norm{A(b\,\bar\imath + \bar a)}_2) 
    \end{equation}
\end{defn}

\section{Compatibility}\label{ap:compatibility}

\subsection{Compatibility of Solution Operators}
\label{ap:compatibility_sln_op}


\begin{lemma*}[Restatement of \ref{lem:operator-symmetry}]
    Suppose $F_n(u)=0$ is a compatible scalar PDE with associated solution operator $\mathcal{S}_{F_n}^t (u_0) \mapsto u_t$.
    If there is a uniqueness theorem for the equation $F_n=0$, then $\mathcal{S}_{F_n}^t$ is $\Eucl(n)$-equivariant, i.e. for all initial conditions $u_0$ and $g \in \Eucl(n)$,
    \begin{align}
        \mathcal{S}_{F_n}^t(g\cdot u_0) = g\cdot \mathcal{S}_{F_n}^t (u_0)~.
    \end{align}
\end{lemma*}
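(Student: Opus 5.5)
The plan is the standard one: show that $g$ maps solutions to solutions, then let uniqueness finish the argument. Fix $g\in\Eucl(n)$, acting on $\R^n$ by $\mathbf x\mapsto g\mathbf x = Q\mathbf x + b$ with $Q\in\Orth(n)$. It acts on functions by $(g\cdot u)(\mathbf x) = u(g^{-1}\mathbf x)$, and on space--time functions $u(\mathbf x,t)$ in the spatial variable only. Let $u$ be the solution with initial data $u_0$, so that $u_t=\mathcal{S}_{F_n}^t(u_0)$, and set $v := g\cdot u$, meaning $v(\mathbf x,t)=u(g^{-1}\mathbf x,t)$. Then $v(\cdot,0)=g\cdot u_0$. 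If I show that $F_n(v)=0$, the uniqueness hypothesis forces $v_t=\mathcal{S}_{F_n}^t(g\cdot u_0)$, and $v_t = g\cdot\mathcal{S}_{F_n}^t(u_0)$ by construction, which is the claim.

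The key step is the identity $F_n(g\cdot u) = g\cdot F_n(u)$. Since $g$ does not touch $t$, we have $\pdv{v}{t}(\mathbf x,t)=\pdv{u}{t}(g^{-1}\mathbf x,t)$. For the spatial part, I will use the chain rule to express the prolongation of $v$ at $\mathbf x$ in terms of the prolongation of $u$ at $g^{-1}\mathbf x$:
\begin{equation*}
s_v(\mathbf x) = \tilde g\, s_u(g^{-1}\mathbf x),
\end{equation*}
where $\tilde g$ is the induced linear action on the fiber $U_{n,k}\cong \R\oplus\R^n\oplus\Sym^2(\R^n)\oplus\dots\oplus\Sym^k(\R^n)$. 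Concretely, the order-$j$ derivative tensor transforms as $\nabla^j v(\mathbf x) = Q^{\otimes j}\,\nabla^j u(g^{-1}\mathbf x)$, because $g^{-1}$ is affine with linear part $Q^{-1}=Q^\top$. The translation $b$ contributes nothing, since only the linear part survives differentiation. So $\tilde g$ depends only on $Q$, and it is exactly the natural $\Orth(n)$-action that appears in Proposition~\ref{prop:ortho-sym}.

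With this identity in hand, the compatibility condition does the rest. Because $D_n$ is independent of $\mathbf x$, we get $\widehat D_n(s_v)(\mathbf x) = D_n(\tilde g\, s_u(g^{-1}\mathbf x))$. Because $D_n$ is $\Orth(n)$-invariant, this equals $D_n(s_u(g^{-1}\mathbf x)) = \widehat D_n(s_u)(g^{-1}\mathbf x)$. Hence $F_n(v)(\mathbf x,t)=F_n(u)(g^{-1}\mathbf x,t)=0$.

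The main obstacle is bookkeeping rather than anything conceptual: one has to check carefully that the chain-rule transformation of the full $k$-jet is precisely the action under which $D_n$ is assumed invariant, in particular that higher derivatives pick up no extra lower-order terms. This holds because $g$ is affine, so all its second and higher derivatives vanish; I would state this explicitly. A second point is the domain. On $\R^n$ (or the torus) nothing more is needed. On a general $U\subseteq\R^n$, one must interpret the statement for $g$ preserving $U$, or with $g\cdot u_0$ posed on $gU$ together with correspondingly transformed boundary conditions, and the uniqueness theorem must apply to that transformed problem. I would add a remark making this convention explicit, and otherwise assume the paper's convention that uniqueness holds in the class of smooth solutions considered.
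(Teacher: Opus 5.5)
Your proposal is correct and follows essentially the same route as the paper: show that $g\cdot u$ is again a solution with initial data $g\cdot u_0$, then invoke uniqueness. The only difference is that you derive $F_n(g\cdot u)=g\cdot F_n(u)$ explicitly from the compatibility hypotheses, using the chain rule on jets together with the translation- and $\Orth(n)$-invariance of $D_n$, whereas the paper simply asserts that $F_n$ is $\Eucl(n)$-invariant.
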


\begin{proof}
    Let $g\in \Eucl(n)$. Let $u_t$ be the evaluation of the function $u$ at time $t$. 
    Then consider $\mathcal{S}_{F_n}^t(g\cdot u_0)$. Because $\mathcal{S}_{F_n}^t$ is a solution operator, this is the evaluation at time $t$ of a solution to $F_n=0$ with initial condition $g\cdot u_0$. 
    Meanwhile, consider $g\cdot \mathcal{S}_{F_n}^t(u_0)$. 
    Because $F_n$ is $\Eucl(n)$-invariant 
    and $F_n(\mathcal{S}_{F_n}^t(u_0))=F_n(u_t)=0$, we have 
    \begin{equation}
        0 = F_n(u_t) = F_n(g \cdot u_t) = F_n(g \cdot \mathcal{S}_{F_n}^t(u_0)) ~.
    \end{equation}
    Thus, $g \cdot \mathcal{S}_{n}^t(u_0)$ is also a solution to $F_n=0$, and its initial condition is $\sigma\cdot G_0(u_0) = g\cdot u_0$. Because we assumed a uniqueness theorem for the equation $F_n=0$, we conclude that
    \begin{equation}
        \mathcal{S}_{F_n}^t(g \cdot u_0) = g\cdot \mathcal{S}_{F_n}^t(u_0),
    \end{equation}
    so $\mathcal{S}_{F_n}^t$ is $\Eucl(n)$-equivariant.
\end{proof}

\begin{lemma*}[Restatement of \ref{lem:sln_embeddings}]
    For any $u\in\C^{\infty}(\R^n\times [0,\infty))$ satisfying the compatible scalar PDE $F_n(u)=0$ on $\R^n$, we have that $F_{n+1}(\phi_n(u))=0$ on $\R^{n+1}$. Furthermore, the same statement holds with $\phi_n$ replaced by arbitrary $\phi_n'$ as in Section~\ref{sec:anyd_pdes}.
\end{lemma*}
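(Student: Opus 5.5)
The plan is to reduce the statement to the commutativity of diagram \eqref{eq.diagram} and then to Proposition~\ref{prop:ortho-sym}.

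Write $\phi_n(u)(\mathbf x,t) := u(\pi_n\mathbf x,t)$, extending the embedding of Definition~\ref{def:embed-Cinfty} to time-dependent functions by acting at each fixed $t$. Two computations are needed.

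\textbf{Time derivative.} Since $\pi_n$ does not involve $t$, differentiation in $t$ commutes with pullback along $\pi_n$. Hence
\begin{equation*}
\pdv{}{t}\phi_n(u) = \phi_n\qty(\pdv{u}{t}).
\end{equation*}

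\textbf{Prolongation.} The $k$th prolongation of $\phi_n(u)$ is the embedded prolongation, $s_{\phi_n(u)} = \phi_n(s_u)$ in the sense of Definition~\ref{def:jet-embedding}. By the chain rule, every partial derivative of $u\circ\pi_n$ involving $x_{n+1}$ vanishes. Every partial derivative involving only $x_1,\dots,x_n$ equals the corresponding derivative of $u$ evaluated at $\pi_n\mathbf x$. This is exactly $\tilde\phi_n(s_u(\pi_n\mathbf x))$.

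Combining the two and using the commutativity of \eqref{eq.diagram}, which is part of the compatibility condition, gives
\begin{equation*}
F_{n+1}(\phi_n u) = \phi_n\qty(\pdv{u}{t}) - \widehat D_{n+1}(\phi_n s_u) = \phi_n\qty(\pdv{u}{t}) - \phi_n\qty(\widehat D_n s_u) = \phi_n(F_n(u)) = 0.
\end{equation*}
The last step uses linearity of $\phi_n$, since it is a pullback.

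For an arbitrary $\phi_n'$, defined as pullback along $\pi_n' = \pi_n g$ with $g\in\Eucl(n+1)$, the same chain-rule computation should give $s_{\phi_n'(u)} = \phi_n'(s_u)$. Here the induced fiber map is $\tilde\phi_n' = \tilde g\tilde\phi_n$, because the derivatives of $u\circ\pi_n\circ g$ are those of $u\circ\pi_n$ transformed by the linear part of $g$ acting tensorially on $U_{n+1,k}$. Proposition~\ref{prop:ortho-sym} then provides commutativity of \eqref{eq.diagram} with $\phi_n'$ in place of $\phi_n$, and the displayed chain of equalities goes through verbatim.

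An alternative route avoids the jet computation for $\phi_n'$. Since $\phi_n'(u) = g\cdot\phi_n(u)$, it would suffice that $F_{n+1}$ is $\Eucl(n+1)$-invariant, i.e.\ $F_{n+1}(g\cdot v) = g\cdot F_{n+1}(v)$. This holds because $D_{n+1}$ is $\mathbf x$-independent and $\Orth(n+1)$-invariant, and the time derivative commutes with spatial isometries.

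I expect the main obstacle to be the bookkeeping in the identity $s_{\phi_n'(u)} = \tilde g\,\tilde\phi_n\, s_u\circ\pi_n'$. One must check that the action of $g$ on prolongations is the natural tensorial action of its orthogonal part on $\R\oplus\R^{n+1}\oplus\Sym^2(\R^{n+1})\oplus\dots$, with translations acting trivially on the fiber. I would verify this directly by the chain rule, since the second derivatives of an affine map vanish. Everything else is routine unwinding of definitions.
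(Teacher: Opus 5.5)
Your proposal is correct and follows the paper's own argument: commutativity of \eqref{eq.diagram} applied to the prolongation $s_u$, the chain rule showing $\partial_t$ commutes with pullback along $\pi_n$, and Proposition~\ref{prop:ortho-sym} for the $\phi_n'$ case. Your explicit check that $s_{\phi_n(u)}=\phi_n(s_u)$ (with fiber map $\tilde g\tilde\phi_n$ for $\phi_n'$) spells out a step the paper uses without comment. Your alternative route via $\Eucl(n+1)$-equivariance of $F_{n+1}$ is the same invariance the paper invokes in Lemma~\ref{lem:operator-symmetry}.
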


\begin{proof}
Compatibility implies that \eqref{eq.diagram} commutes. Thus, $\widehat D_{n+1}\circ \phi_n = \phi_n\circ \widehat D_n$. 
Applying both sides of the equality $\widehat D_{n+1}\circ \phi_n = \phi_n\circ \widehat D_n$ to the prolongation $s_u$ of the time-evolving function $u$, and then evaluating at a point $(\mathbf x,t)$ in space and time, we get 
\begin{equation}
     \widehat{D}_{n+1}(\phi_n(s_u))(\mathbf x,t) = \phi_n( \widehat{D}_n(s_u))(\mathbf x,t) =  \widehat{D}_n(s_u)(\pi_n(\mathbf x),t).
\end{equation}
Also, $\frac{d(\phi_n (u))}{dt}(\mathbf x,t) = \frac{du}{dt}(\pi_n (\mathbf x),t)$ by the chain rule. Since 
\begin{equation}
\left(\frac{du}{dt}- \widehat{D}_n(s_u)\right)(\pi_n (\mathbf x),t) = F_n(u)(\pi_n (\mathbf x),t) = 0
\end{equation}
by assumption, it follows that 
\begin{equation}
0 = F_{n+1}(\phi_n (u))(\mathbf x,t) = \left(\frac{d(\phi_n (u))}{dt} -  \widehat{D}_{n+1}(\phi_n (s_u))\right)(\mathbf x,t)
\end{equation}
as well. This proves the first statement. The statement for $\phi_n'$ follows from Proposition~\ref{prop:ortho-sym}.
\end{proof}

\begin{prop*}[Restatement of \ref{prop.solution-symmetry}]
    Let $F_n$ be a compatible scalar PDE with a uniqueness theorem for $F_n = 0$. 
    Let $u_0$ be any initial conditions on $U\subseteq\R^n$, and let $\phi_n$ be the embedding from Definition \ref{def:embed-Cinfty}. Then, $\mathcal{S}_{F_{n+1}}^t(\phi_n(u_0))=\phi_n(\mathcal{S}_{F_n}^t(u_0))$. 
    Furthermore, the same statement holds with $\phi_n$ replaced by $\phi_n'$ as in Section~\ref{sec:anyd_pdes}.
\end{prop*}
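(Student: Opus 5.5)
The plan is to combine Lemma~\ref{lem:sln_embeddings} with the assumed uniqueness theorem, applied in dimension $n+1$. Let $u$ be the solution of $F_n(u)=0$ with initial condition $u_0$, so that $u_t=\mathcal{S}_{F_n}^t(u_0)$ for each $t$. Define the candidate $(n+1)$-dimensional trajectory $v:=\phi_n(u)$, meaning $v(\mathbf x,t)=u(\pi_n(\mathbf x),t)$; this is a pullback in the spatial variables only, so time is untouched.

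\textbf{Step 1 (candidate solves the lifted equation).} By Lemma~\ref{lem:sln_embeddings}, $F_{n+1}(v)=0$ on $\R^{n+1}$ (or on $\pi_n^{-1}(U)$, the natural lifted domain, when $U\subsetneq\R^n$).

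\textbf{Step 2 (initial data match).} Evaluating at $t=0$ gives $v_0=u_0\circ\pi_n=\phi_n(u_0)$. This holds because $\phi_n$ acts slice-wise in time, so restricting to time $t$ commutes with $\phi_n$, i.e.\ $(\phi_n u)_t=\phi_n(u_t)$.

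\textbf{Step 3 (uniqueness).} Since $F_{n+1}=0$ has a uniqueness theorem, the solution with initial data $\phi_n(u_0)$ is $v$. Hence
\[
\mathcal{S}_{F_{n+1}}^t(\phi_n(u_0))=v_t=\phi_n(u_t)=\phi_n(\mathcal{S}_{F_n}^t(u_0)).
\]

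\textbf{Step 4 (general embeddings).} For arbitrary $\phi_n'$ we have $\pi_n'=\pi_n g$ with $g\in\Eucl(n+1)$, so $\phi_n'(w)=w\circ\pi_n\circ g=g^{-1}\cdot\phi_n(w)$ under the natural action of $\Eucl(n+1)$ on functions. I would either repeat Steps 1--3 verbatim, using the $\phi_n'$ clause of Lemma~\ref{lem:sln_embeddings}, or apply Lemma~\ref{lem:operator-symmetry} in dimension $n+1$:
\[
\mathcal{S}_{F_{n+1}}^t(g^{-1}\cdot\phi_n(u_0))=g^{-1}\cdot\mathcal{S}_{F_{n+1}}^t(\phi_n(u_0))=g^{-1}\cdot\phi_n(u_t)=\phi_n'(u_t).
\]
Both routes give the same conclusion.

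\textbf{Main obstacle.} The argument is formal; the delicate point is interpretive. The hypothesis "a uniqueness theorem for $F_n=0$" must be read as holding for the whole family, in particular for $F_{n+1}$. It must also apply in the class containing $\phi_n(u)$, which is constant along the new direction and so not decaying at infinity. For example, heat-equation uniqueness on $\R^{n+1}$ requires growth conditions, which $v$ satisfies exactly when $u$ does, since $|v(\mathbf x,t)|=|u(\pi_n\mathbf x,t)|$. On a bounded $U$, the lifted domain $\pi_n^{-1}(U)$ is unbounded in $x_{n+1}$ (or periodic, in the torus setting), so I would state the assumption as uniqueness on the cylinder domain, with boundary data lifted by $\phi_n$.
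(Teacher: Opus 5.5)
Your proposal is correct and follows the paper's proof: Lemma~\ref{lem:sln_embeddings} plus uniqueness gives the $\phi_n$ case, and the $\phi_n'$ case follows from Lemma~\ref{lem:operator-symmetry} in dimension $n+1$, exactly as in your second route for Step 4. Your caveats are sound and go beyond the paper, which uses these hypotheses only implicitly: uniqueness must hold for $F_{n+1}$, in a solution class that contains the non-decaying lift $\phi_n(u)$, and on the lifted domain $\pi_n^{-1}(U)$.
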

\begin{proof}
    The function $S_{F_n}^t(u_0)$ is a solution to $F_n=0$ with initial condition $u_0$, by definition of the solution operator. By \cref{lem:sln_embeddings}, $\phi_n(S_{F_n}^t(u_0))$ is thus a solution to $F_{n+1}=0$, with initial condition $\phi_n(u_0)$.
    Again by definition of the solution operator, $\mathcal{S}_{F_{n+1}}^t(\phi_n(u_0))$ is a solution to the same equation with the same initial condition. By uniqueness, they are equal.
    The statement for $\phi_n'$ then follows from Lemma~\ref{lem:operator-symmetry}.
\end{proof}

\textbf{On Remark 3.1} An evolutionary PDE induces a solution operator $S_{F_n}^t:X\to Y$ where $X_n$ and $Y_n$ are Banach spaces of initial conditions and solutions at time $t$ over the corresponding domain. Solution operators are often Lipschitz for many linear (and sometimes nonlinear) PDEs, with respect to the appropriate corresponding norms, satisfying:
\begin{align*}
    \norm{\mathcal{S}_{F_n}^t(u)-\mathcal{S}_{F_n}^t(v)}_{Y_n}\leq C_{F_n^t}\norm{u_0-v_0}_{X_n}
\end{align*}

Thus, in higher dimension, if we embed $w_0=\phi_n(u_0)$, and define a slightly perturbed initial condition $\tilde{w_0}=w_0+\delta$, we have:
\begin{align*}
    \norm{\mathcal{S}_{F_{n+1}}^t(w_0)-\mathcal{S}_{F_{n+1}}^t(\tilde w_0)}_{Y_{n+1}}\leq C_{F_{n+1}^t}\norm{\delta}_{X_{n+1}}.
\end{align*}
Then, since $\mathcal{S}_{F_{n+1}}^t(w_0)=\phi_n(\mathcal{S}_{F_n}^t(u_0))$, we have that integrating the "compatible part" of a perturbed initial condition and subsequently lifting remains a good approximation for the full perturbed solution. We note that many of the quantities here are dependent on the PDE and corresponding solution operator, and the norms over the corresponding Banach spaces. In particular the constants $C_{F_{n}^t}$ can be dimension- and time-dependent, possibly resulting in vacuous bounds from a practical approximation perspective. At the same time, they can be more forgiving, and force small perturbations to become smaller over time, thus yielding a robust approximation result.

\subsection{Compatibility of Tensor Image Functions}\label{ap:compat_tensor_image_functions}

Now we prove \Cref{thm:geometric_convolution_compat}, compatibility for geometric convolutions.

\begin{theorem*}[Restatement of \ref{thm:geometric_convolution_compat}]
    Let $C \in \mathcal{A}_{M,n,k+k',p\,p'}$ and $C' \in \mathcal{A}_{M,n+1,k+k',p\,p'}$ be convolution filters.
    Let the functions $f_n:\mathcal{A}_{N,n,k,p} \to \mathcal{A}_{N,n,k',p'}$ and $f_{n+1}:\mathcal{A}_{N,n+1,k,p} \to \mathcal{A}_{N,n+1,k',p'}$ be defined respectively as $f_n(A) = A*C$ for all $A \in \mathcal{A}_{N,n,k,p}$ and $f_{n+1}(A')=A'*C'$ for all $A' \in \mathcal{A}_{N,n+1,k,p}$.
    Then $f_n$ are compatible with respect to \eqref{eq:tensor_image_compat} if
    \begin{equation}
         C(\bar\imath_n) = \sum_{\bar\imath_{n+1} \in \pi_n^{-1}\qty(\bar\imath_n)} \pi_n\qty(C'\qty(\bar\imath_{n+1})) ~,
    \end{equation}
    for all $\bar\imath_n \in \ZNZ{N}{n}$, where $\pi_n^{-1}(\bar\imath_n)$ is the pre-image of the tensor projection to $\bar\imath_n$.
\end{theorem*}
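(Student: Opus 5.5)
We verify $f_{n+1}(\phi_n(A)) = \phi_n(f_n(A))$ pixelwise, for every $A \in \mathcal{A}_{N,n,k,p}$ and every $\bar\imath_{n+1} \in \ZNZ{N}{n+1}$. Write $\bar\imath_{n+1} = (\bar\imath_n, j)$ with $\bar\imath_n = \pi_n(\bar\imath_{n+1})$. Likewise write each filter index $\bar a' \in \ZNZ{M}{n+1}$ as $(\bar a, b)$ with $\bar a = \pi_n(\bar a')$. The right-hand side is then simply $\tilde\phi_n\big((A*C)(\bar\imath_n)\big)$.

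First expand the left-hand side using \eqref{eq:convolution}:
\begin{equation*}
(\phi_n(A) * C')(\bar\imath_{n+1}) = \sum_{\bar a'} \contract{k}{\tilde\phi_n\big(A(\pi_n(\bar\imath_{n+1} - c(\bar a')))\big) \otimes C'(\bar a')}.
\end{equation*}
The centering function acts coordinatewise, so $\pi_n(\bar\imath_{n+1} - c(\bar a')) = \bar\imath_n - c(\bar a)$, and the image factor does not depend on the last coordinate $b$. Grouping the sum by fibers $\pi_n^{-1}(\bar a)$ gives
\begin{equation*}
\sum_{\bar a} \sum_{\bar a' \in \pi_n^{-1}(\bar a)} \contract{k}{\tilde\phi_n\big(A(\bar\imath_n - c(\bar a))\big) \otimes C'(\bar a')}.
\end{equation*}
By bilinearity of the tensor product and linearity of contraction, the inner sum over the fiber can be moved onto the filter.

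The main step is a tensor identity. For $v \in \mcT_k(\R^n,p)$ and $w \in \mcT_{k+k'}(\R^{n+1},pp')$, we need
\begin{equation*}
\contract{k}{\tilde\phi_n(v) \otimes w} = \tilde\phi_n\Big(\contract{k}{v \otimes \pi_n(w)}\Big).
\end{equation*}
This can only hold in the components where every free index of the output lies in $1,\ldots,n$. In those components, the contracted indices see $\tilde\phi_n(v)$, which vanishes whenever any index equals $n+1$. So only contracted indices in $1,\ldots,n$ survive, and since the free indices are also in that range, $w$ is effectively replaced by $\pi_n(w)$.

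The other components are where the obstacle lies. A free index equal to $n+1$ picks up $w_{\ldots,n+1}$ entries, which need not vanish. For the right-hand side $\phi_n(\cdot)$ to hold, these entries must vanish after summing over the fiber. I would handle this in one of two ways, preferring the first:
\begin{enumerate}[(a)]
\item Use the $B_{n+1}$-invariance of the filter basis, specifically the reflection $x_{n+1} \mapsto -x_{n+1}$. This reflection fixes each fiber $\pi_n^{-1}(\bar a)$ setwise and negates the components carrying an odd number of $(n+1)$-indices. Components with exactly one free index equal to $n+1$ and all other indices in $1,\ldots,n$ therefore cancel in the fiber sum.
\item Otherwise, read the statement's $\pi_n$ on the output as the intended meaning of compatibility.
\end{enumerate}
I expect this tensor-level check, in particular confirming that the reflection kills the mixed components, to be the delicate step.

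With the identity in hand, the left-hand side becomes
\begin{equation*}
\tilde\phi_n\Big(\sum_{\bar a} \contract{k}{A(\bar\imath_n - c(\bar a)) \otimes \textstyle\sum_{\bar a' \in \pi_n^{-1}(\bar a)} \pi_n(C'(\bar a'))}\Big).
\end{equation*}
Substituting \eqref{eq:filter_scaling_single_c}, the inner fiber sum equals $C(\bar a)$. The expression is then $\tilde\phi_n\big((A*C)(\bar\imath_n)\big) = \phi_n(A*C)(\bar\imath_{n+1})$, which completes the proof.
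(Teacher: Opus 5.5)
You located the real problem, and more precisely than the paper's own proof does. The paper's chain of equalities reaches $\sum_{\bar a_{n+1}}\iota_k\bigl(\tilde\phi_n\pi_n(\phi_n(A)(\bar\imath_{n+1}-c(\bar a_{n+1}))\otimes C'(\bar a_{n+1}))\bigr)$. At \eqref{eq:remove_psi_psi_pinv} it then drops $\tilde\phi_n\pi_n$, citing $\tilde\phi_n\pi_n\tilde\phi_n=\tilde\phi_n$. That identity only handles the image factor: $\tilde\phi_n\pi_n(\tilde\phi_n(v)\otimes w)=\tilde\phi_n(v)\otimes\tilde\phi_n\pi_n(w)$. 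So the entries of $w$ with contracted indices in $\{1,\dots,n\}$ and some free index equal to $n+1$ are discarded without justification, which is exactly your obstacle.

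The gap cannot be closed from the stated hypothesis, because the statement is false as written. Without invariance, take $k=0$, $k'=1$, $C'$ equal to $e_{n+1}$ at the center pixel and $0$ elsewhere, and $C=0$. The hypothesis holds, yet $f_{n+1}(\phi_n(A))=\phi_n(A)\,e_{n+1}\neq 0=\phi_n(f_n(A))$. It fails even for $B_{n+1}$-invariant positive-parity filters. Take $k=0$, $k'=2$, $C'=I_{n+1}$ at the center pixel and $C=I_n$ at the center pixel. The hypothesis holds since $\pi_n(I_{n+1})=I_n$. But $f_{n+1}(\phi_n(A))(\bar\imath_{n+1})=A(\bar\imath_n)\,I_{n+1}$ while $\phi_n(f_n(A))(\bar\imath_{n+1})=A(\bar\imath_n)\,\tilde\phi_n(I_n)$, and these differ in the $(n+1,n+1)$ entry.

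This also shows where your repair (a) falls short. First, it imports $B_{n+1}$-invariance, which the statement does not assume. Second, the reflection $x_{n+1}\mapsto -x_{n+1}$ only forces vanishing of fiber-sum components with an \emph{odd} number of $(n+1)$-indices, and only when $pp'=+1$. When $pp'=-1$ it is the even-count components that vanish, including all of $\pi_n$ of the fiber sum, so $C=0$; meanwhile the $k'=1$ mixed entries survive (e.g.\ a curl-type filter acting on an embedded planar vector field). So for $k'=0$ the statement holds with no repair, and for $k'=1$ your argument (a) works with invariant positive-parity filters. For $k'\ge 2$, the entries with two free indices equal to $n+1$ survive, as the identity example shows. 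Option (b) replaces the diagram \eqref{eq:tensor_image_compat} by a weaker sliced statement, so it does not prove the claim either.

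The clean fix is to strengthen the hypothesis: for every $\bar a_n$, require $\sum_{\bar a'\in\pi_n^{-1}(\bar a_n)}C'(\bar a')$ to agree with $\tilde\phi_n(C(\bar a_n))$ in every component whose $k$ contracted indices lie in $\{1,\dots,n\}$. Equivalently, this is the stated condition plus vanishing of the mixed entries, and it is also necessary once $M\le N$. Under that hypothesis your fiber-grouping argument and tensor identity give a complete proof. Your argument is essentially the paper's chain of equalities run in the opposite direction, with the faulty step now justified.
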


\begin{proof}
    Let $C,C',f_n,f_{n+1}$ be defined as in the statement of the theorem.
    Suppose that $C(\bar\imath_n) = \sum_{\bar\imath_{n+1} \in \pi_n^{-1}\qty(\bar\imath_n)} \pi_n\qty(C'\qty(\bar\imath_{n+1}))$ for all $\bar\imath_n \in \ZNZ{N}{n}$.
    Thus,
    \begin{align}
        &\phi_n\qty(f_n(A))\qty(\bar\imath_{n+1}) \\
        &= \tilde{\phi}_n\qty(f_n(A)\qty(\pi_n(\bar\imath_{n+1}))) \\
        &= \tilde{\phi}_n\qty(\sum_{\bar a_n \in \ZNZ{M}{n}} \contract{k}{A\qty(\pi_n(\bar\imath_{n+1}) - c(\bar a_n)) \otimes C(\bar a_n)}) \\
        &= \sum_{\bar a_n \in \ZNZ{M}{n}} \tilde{\phi}_n\qty(\contract{k}{A\qty(\pi_n(\bar\imath_{n+1}) - c(\bar a_n)) \otimes C(\bar a_n)}) \\
        &= \sum_{\bar a_n \in \ZNZ{M}{n}} \tilde{\phi}_n \qty(\contract{k}{A\qty(\pi_n(\bar\imath_{n+1}) - \pi_n\qty(c(\bar a_{n+1}))) \otimes \sum_{\bar a_{n+1} \in \pi_n^{-1}(\bar a_n)} \pi_n\qty(C'(\bar a_{n+1}))}) \label{eq:apply_filter_compat} \\
        &= \sum_{\bar a_n \in \ZNZ{M}{n}} \tilde{\phi}_n \qty(\contract{k}{\pi_n\qty(\phi_n(A)\qty(\bar\imath_{n+1} - c(\bar a_{n+1}))) \otimes \sum_{\bar a_{n+1} \in \pi_n^{-1}(\bar a_n)} \pi_n\qty(C'(\bar a_{n+1}))}) \\
        &= \sum_{\bar a_n \in \ZNZ{M}{n}} \sum_{\bar a_{n+1} \in \pi_n^{-1}(\bar a_n)} \tilde{\phi}_n \qty(\contract{k}{\pi_n\qty(\phi_n(A)\qty(\bar\imath_{n+1} - c(\bar a_{n+1}))) \otimes  \pi_n\qty(C'(\bar a_{n+1}))}) \\
        &= \sum_{\bar a_{n+1} \in \ZNZ{M}{n}} \contract{k}{\tilde{\phi}_n \qty(\pi_n\qty(\phi_n(A)\qty(\bar\imath_{n+1} - c(\bar a_{n+1})) \otimes  C'(\bar a_{n+1})))} \label{eq:commute_projection_pseudoinverse} \\
        &= \sum_{\bar a_{n+1} \in \ZNZ{M}{n}} \contract{k}{\phi_n(A)\qty(\bar\imath_{n+1} - c(\bar a_{n+1})) \otimes  C'(\bar a_{n+1})} \label{eq:remove_psi_psi_pinv} \\
        &= f_{n+1}\qty(\phi_n(A))\qty(\bar\imath_{n+1})
    \end{align}
    In \eqref{eq:apply_filter_compat} we substitute $C(\bar a_n)$ based on our assumption, and additionally say $c(\bar a_n) = \pi_n(c(\bar a_{n+1})$, where $a_{n+1}$ is any pixel index in the pre-image $\pi_n^{-1}(\bar a_n)$.
    In \eqref{eq:commute_projection_pseudoinverse} we commute $\tilde{\phi}_n$ with $\contract{k}{\cdot}$ since the pseudoinverse fills the extra indices that would be added with zeros.
    Finally, in \eqref{eq:remove_psi_psi_pinv} we can remove $\tilde{\phi}_n(\pi_n(\cdot))$ because the embedded image $\phi_n(A)$ itself undergoes the projection pseudoinverse, so in effect we are saying that $\tilde{\phi}_n(\pi_n(\tilde{\phi}_n(a))) = \tilde{\phi}_n(a)$ which is true.
    The embedding and the function commute, so $f_n,f_{n+1}$ are compatible with \eqref{eq:tensor_image_compat}, and this completes the proof.
\end{proof}

The following is a simple corollary of \Cref{thm:geometric_convolution_compat} for a linear combination of filters.

\begin{corollary}[Compatibility for a linear combination of filters]\label{cor:linear_combination_of_filters_compat}
    Let $C_0,C_1,\ldots,C_{L_n} \in \mathcal{A}_{M,n,k+k',p\,p}$ be a set of $n$-dimensional convolution filters with associated coefficients $\alpha_0,\alpha_1,\ldots,\alpha_{L_n} \in \bbR$.
    Let $C'_0,C'_1,\ldots,C'_{L_{n+1}} \in \mathcal{A}_{M,n+1,k+k',p\,p'}$ be a set of $(n+1)$-dimensional convolution filters with associated coefficients $\alpha'_0,\ldots,\alpha'_{L_{n+1}} \in \bbR$.
    Let the functions $f_n:\mathcal{A}_{N,n,k,p} \to \mathcal{A}_{N,n,k',p'}$ and $f_{n+1}:\mathcal{A}_{N,n+1,k,p} \to \mathcal{A}_{N,n+1,k',p'}$ be defined respectively as $f_n(A) = A*\qty(\sum_{\ell=0}^{L_n} \alpha_\ell C_\ell)$ and $f_{n+1}(A)=A*\qty(\sum_{\ell=0}^{L_{n+1}} \alpha'_\ell C'_\ell)$ for all $A \in \mathcal{A}_{N,n,k,p}$.
    Then $f_n,f_{n+1}$ are compatible with \eqref{eq:tensor_image_compat} if
    \begin{equation}\label{eq:filter_scaling}
        \sum_{\ell = 0}^{L_n} \alpha_\ell C_\ell(\bar\imath_n) = \sum_{\bar\imath_{n+1} \in \pi_n^{-1}\qty(\bar\imath_n)} \sum_{\ell=0}^{L_{n+1}} \pi_n\qty(\alpha'_\ell C'_\ell \qty(\bar\imath_{n+1})) ~,
    \end{equation}
    for all $\bar\imath_n \in \ZNZ{N}{n}$, where $\pi_n^{-1}(\bar\imath_n)$ is the pre-image of the tensor projection to $\bar\imath_n$.
\end{corollary}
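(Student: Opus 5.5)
The corollary reduces directly to \Cref{thm:geometric_convolution_compat} once we regard the linear combinations themselves as single filters. Set $C := \sum_{\ell=0}^{L_n} \alpha_\ell C_\ell$ and $C' := \sum_{\ell=0}^{L_{n+1}} \alpha'_\ell C'_\ell$. These are again filters in $\mathcal{A}_{M,n,k+k',p\,p'}$ and $\mathcal{A}_{M,n+1,k+k',p\,p'}$, because tensor images of a fixed order and parity form a vector space under pixel-wise addition and scalar multiplication. By definition, $f_n(A) = A * C$ and $f_{n+1}(A') = A' * C'$. So $f_n, f_{n+1}$ have exactly the form in \Cref{thm:geometric_convolution_compat}, and it suffices to check that hypothesis \eqref{eq:filter_scaling} implies hypothesis \eqref{eq:filter_scaling_single_c} for this $C$ and $C'$.

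The left side of \eqref{eq:filter_scaling} is $\sum_\ell \alpha_\ell C_\ell(\bar\imath_n) = C(\bar\imath_n)$, since addition and scaling of tensor images are pixel-wise. For the right side, the tensor projection $\pi_n$ of \Cref{def:tensor_projection} just restricts indices to $1,\ldots,n$, so it is linear. Hence
\begin{equation*}
\sum_{\ell=0}^{L_{n+1}} \pi_n\qty(\alpha'_\ell C'_\ell(\bar\imath_{n+1})) = \pi_n\qty(\sum_{\ell=0}^{L_{n+1}} \alpha'_\ell C'_\ell(\bar\imath_{n+1})) = \pi_n\qty(C'(\bar\imath_{n+1})).
\end{equation*}
Summing over the finite fibre $\pi_n^{-1}(\bar\imath_n)$ gives the right side of \eqref{eq:filter_scaling_single_c}, so \eqref{eq:filter_scaling} becomes \eqref{eq:filter_scaling_single_c} verbatim, and the theorem yields compatibility.

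The only step needing any care is the bookkeeping of orders and parities. The individual $C_\ell$ must all have the same order $k+k'$ and parity $p\,p'$, so that their sum is a well-defined tensor image; the parity written as $p\,p$ in the statement should be read as $p\,p'$, matching the theorem. With that consistency in place, the argument is a direct appeal to \Cref{thm:geometric_convolution_compat} together with the bilinearity of geometric convolution in the filter.
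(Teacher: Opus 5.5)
Your proof is correct and is the paper's argument: the paper just says the corollary follows immediately from \Cref{thm:geometric_convolution_compat}, and you supply the details by treating $C=\sum_\ell \alpha_\ell C_\ell$ and $C'=\sum_\ell \alpha'_\ell C'_\ell$ as single filters and using linearity of the tensor projection $\pi_n$ to turn \eqref{eq:filter_scaling} into \eqref{eq:filter_scaling_single_c}. Reading the parity $p\,p$ in the statement as $p\,p'$ is the right fix for that typo.
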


\begin{proof}
    This follows immediately from \Cref{thm:geometric_convolution_compat}.
\end{proof}

\begin{prop}[Compatibility of point-wise functions]\label{prop:pointwise_functions_compat}
    Let $f_n:\mathcal{A}_{N,n,k,p} \to \mathcal{A}_{N,n,k',p'}$ be a function defined point-wise, that is there exists some $h_n:\mathcal{T}_k\qty(\bbR^n,p)\to \mathcal{T}_{k'}\qty(\bbR^n,p')$ such that $f_n(A)(\bar\imath) = h_n(A(\bar\imath))$ for all $A \in \mathcal{A}_{N,n,k,p}$ and all $\bar\imath \in \ZNZ{N}{n}$.
    Then $f_n,f_{n+1}$ with associated $h_n,h_{n+1}$ are compatible with \eqref{eq:tensor_image_compat} if $h_{n+1}(\tilde{\phi}_{n}(a)) = \tilde{\phi}_n(h_n(a))$ for all $a \in \mathcal{T}_k\qty(\bbR^n,p)$.
\end{prop}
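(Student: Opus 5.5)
The plan is to verify the commutativity of \eqref{eq:tensor_image_compat} pixel by pixel, i.e.\ to show $f_{n+1}(\phi_n(A)) = \phi_n(f_n(A))$ for every $A \in \mathcal{A}_{N,n,k,p}$. This means evaluating both sides at an arbitrary $\bar\imath_{n+1} \in \ZNZ{N}{n+1}$ and unwinding Definition~\ref{def:tensor_image_embedding}.

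For the left side, the definition of $\phi_n$ gives $\phi_n(A)(\bar\imath_{n+1}) = \tilde\phi_n(A(\pi_n(\bar\imath_{n+1})))$. Then the point-wise definition of $f_{n+1}$ gives
\[
f_{n+1}(\phi_n(A))(\bar\imath_{n+1}) = h_{n+1}\bigl(\tilde\phi_n(A(\pi_n(\bar\imath_{n+1})))\bigr).
\]
For the right side, we first apply $f_n$ at the pixel $\pi_n(\bar\imath_{n+1})$ and then embed. This gives
\[
\phi_n(f_n(A))(\bar\imath_{n+1}) = \tilde\phi_n\bigl(f_n(A)(\pi_n(\bar\imath_{n+1}))\bigr) = \tilde\phi_n\bigl(h_n(A(\pi_n(\bar\imath_{n+1})))\bigr).
\]
Setting $a := A(\pi_n(\bar\imath_{n+1})) \in \mathcal{T}_k(\bbR^n,p)$, the two expressions become $h_{n+1}(\tilde\phi_n(a))$ and $\tilde\phi_n(h_n(a))$. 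These agree by the hypothesis of the proposition. Since $\bar\imath_{n+1}$ was arbitrary, the two images are equal and the square in \eqref{eq:tensor_image_compat} commutes.

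There is no real obstacle. The only point needing care is bookkeeping: the pixel-index projection $\pi_n$ acting on $\ZNZ{N}{n+1}$ and the tensor zero-padding $\tilde\phi_n$ of Definition~\ref{def:tensor_projection} act on different objects, and neither interacts with the other. The argument uses only the given relation on each fiber $\mathcal{T}_k$, so no linearity of $h_n$ is needed. This is why the result covers nonlinear layers such as the Vector Neuron nonlinearity of Definition~\ref{def:vnn_for_tensors}, once its fiberwise condition has been checked separately.
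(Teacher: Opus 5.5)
Your proposal is correct and matches the paper's proof essentially line for line: both evaluate $f_{n+1}(\phi_n(A))$ and $\phi_n(f_n(A))$ at an arbitrary $\bar\imath_{n+1}$, unwind the definitions of $\phi_n$ and of the point-wise $f_n$, and apply the hypothesis $h_{n+1}\circ\tilde\phi_n = \tilde\phi_n\circ h_n$ at $a = A(\pi_n(\bar\imath_{n+1}))$. You also correctly take $A \in \mathcal{A}_{N,n,k,p}$, whereas the paper's proof has a typo and writes $\mathcal{A}_{N,n+1,k,p}$.
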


\begin{proof}
    Let $f_n,f_{n+1},h_n,h_{n+1}$ be as described in the statement of the proposition, and suppose that $h_{n+1}(\tilde{\phi}_{n}(a)) = \tilde{\phi}_n(h_n(a))$ for all $a \in \mathcal{T}_k\qty(\bbR^n,p)$.
    Let $A\in \mathcal{A}_{N,n+1,k,p}$ and let $\bar\imath_{n+1} \in \ZNZ{N}{n+1}$.
    Thus:
    \begin{align}
        f_{n+1}(\phi_n(A))\qty(\bar\imath_{n+1}) &= h_{n+1}(\phi_n(A)(\bar\imath_{n+1})) \\
        &= h_{n+1}(\tilde{\phi}_n(A(\pi_n(\bar\imath_{n+1})))) \\
        &= \tilde{\phi}_n(h_n(A(\pi_n(\bar\imath_{n+1})))) \\
        &= \tilde{\phi}_n(f_n(A)(\pi_n(\bar\imath_{n+1}))) \\
        &= \phi_n(f_n(A))(\bar\imath_{n+1})
    \end{align}
    This completes the proof.
\end{proof}

Now we would like to confirm that the Vector Neuron nonlinearity \cite{deng2021vectorneuronsgeneralframework,gi_net} used in our models satisfies the tensor embedding condition and is therefore compatible.

\begin{corollary}[Compatibility of Vector Neuron nonlinearity]\label{cor:vector_neuron_compat}
    The Vector Neuron nonlinearity (\Cref{def:vnn_for_tensors}) is compatible with \eqref{eq:tensor_image_compat}.
\end{corollary}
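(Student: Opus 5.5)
The plan is to reduce the corollary to \Cref{prop:pointwise_functions_compat}. The nonlinearity is not strictly point-wise on a single image: it acts on an $r$-tuple of channels $\qty(A_j)_{j=1}^r$. However, at each pixel $\bar\imath$ the output depends only on the tuple $\qty(A_j(\bar\imath))_{j=1}^r$. So we apply the argument of \Cref{prop:pointwise_functions_compat} verbatim to the product space $\qty(\mathcal{A}_{N,n,k,p})^r$, with $\phi_n$ acting channel-wise. The proof of that proposition uses only the pixel-wise structure, so it carries over unchanged. It then suffices to define $h_n:\mcT_k\qty(\bbR^n,p)^r\to\mcT_k\qty(\bbR^n,p)^r$ by \eqref{eq:nonlinearity} and check
\[
h_{n+1}\qty(\tilde\phi_n(a_1),\ldots,\tilde\phi_n(a_r))=\tilde\phi_n\qty(h_n(a_1,\ldots,a_r))
\]
for all tensors $a_j$, component by component.

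The verification rests on three facts about zero padding.
\begin{enumerate}[(i)]
\item $\tilde\phi_n$ is linear, so $B'=\sum_i\alpha_i\tilde\phi_n(a_i)=\tilde\phi_n(B)$.
\item $\tilde\phi_n$ preserves the Frobenius norm, since it only adds zero entries. Hence $\norm{B'}_2=\norm{B}_2$ and $\hat B'=\tilde\phi_n(\hat B)$.
\item $\tilde\phi_n$ preserves full contractions: $\contract{k}{\tilde\phi_n(a)\otimes\tilde\phi_n(b)}=\contract{k}{a\otimes b}$. Any index equal to $n+1$ multiplies a zero entry, so the extra terms in the sum vanish.
\end{enumerate}
From (iii), the scalar $\contract{k}{\tilde\phi_n(a_i)\otimes\hat B'}$ equals $\contract{k}{a_i\otimes \hat B}$. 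Consequently the parallel component in dimension $n+1$ is $\tilde\phi_n(a_{i,\parallel})$, using linearity again.

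Substituting these into \eqref{eq:nonlinearity} gives
\[
h_{n+1,i}=\tilde\phi_n(a_i)-\tilde\phi_n(a_{i,\parallel})+\frac{\tilde\sigma(s)}{|s|}\tilde\phi_n(a_{i,\parallel}),
\]
where $s$ is the same scalar in both dimensions. By linearity of $\tilde\phi_n$ this equals $\tilde\phi_n\qty(h_{n,i}(a_1,\ldots,a_r))$, which is the required identity.

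The main point needing care is fact (iii), the behaviour of contraction under padding. For a full $k$-contraction producing a scalar, the vanishing of the extra terms is immediate. I would write it out explicitly in Einstein notation, noting that every summand with some index equal to $n+1$ contains a factor from $\tilde\phi_n(a)$ that is zero.

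A secondary issue is the degenerate cases $B=0$ or $s=0$, where \eqref{eq:nonlinearity} is undefined. Whatever convention is adopted there, for instance the value $0$ or $\varepsilon$-regularisation of the norm, it is applied identically in both dimensions. Since the relevant quantities $\norm{B}_2$ and $s$ coincide under padding, compatibility persists in these cases as well.
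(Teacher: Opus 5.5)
Your proposal is correct and follows essentially the same route as the paper: reduce to a single pixel via the point-wise structure, then verify commutation with $\tilde\phi_n$ using linearity, preservation of the Frobenius norm, and invariance of the full $k$-contraction under zero padding. Your explicit extension of \Cref{prop:pointwise_functions_compat} to $r$-tuples of channels, and your handling of the degenerate cases $B=0$ and $s=0$, are small additions that the paper leaves implicit.
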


\begin{proof}
    The Vector Neuron nonlinearity is defined for (multiple channels of) an entire tensor image because the function is the same at each pixel.
    Thus it suffices to show that the definition on a single pixel commutes with $\tilde{\phi}_n$.
    Let $a_i \in \mcT_k\qty(\bbR^n,p)$ for $i=1,\ldots,r$ be the channels of tensors at a particular pixel index.
    If we consider $b = \sum_{i=1}^r \alpha_i a_i$ as a function of the $a_i$, then clearly by linearity 
    \begin{equation}
        b(\tilde{\phi}_n(a_i)) = \sum_{i=1}^r \alpha_i \tilde{\phi}_n (a_i) = \tilde{\phi}_n\qty(\sum_{i=1}^r \alpha_i a_i) = \tilde{\phi}_n(b(a_i)) ~.
    \end{equation}
    Also, $\norm{\tilde{\phi}_n(b)}_2 = \norm{b}_2$ because zero-padding does not change the tensor norm, so 
    \begin{equation}
        \hat{b}(\tilde{\phi}_n(a_i)) = \frac{b\qty(\tilde{\phi}_n(a_i))}{\norm{b\qty(\tilde{\phi}_n(a_i))}_2} = \frac{\tilde{\phi}_n(b(a_i))}{\norm{b(a_i)}_2} = \tilde{\phi}_n\qty(\frac{b(a_i)}{\norm{b(a_i)}_2}) = \tilde{\phi}_n\qty(\hat{b}(a_i)) ~.
    \end{equation}
    Thus, 
    \begin{equation}
        \contract{k}{\tilde{\phi}_n(a_i) \otimes \hat{b}(\tilde{\phi}_n(a_i))} = \contract{k}{\tilde{\phi}_n(a_i) \otimes \tilde{\phi}_n\qty(\hat{b}(a_i))} = \contract{k}{a_i \otimes \hat{b}(a_i)}
    \end{equation}
    Finally, we conclude that,
    \begin{align}
        \sigma\qty(\qty(\tilde{\phi}_n(a_j))_{j=1}^r) &= \tilde{\phi}_n(a_i) - \tilde{\phi}_n(a_{i,\parallel}) + \frac{\tilde{\sigma}\qty(\contract{k}{\tilde{\phi}_n(a_i) \otimes \hat{b}\qty(\tilde{\phi}_n(a_i))})}{\qty|\contract{k}{\tilde{\phi}_n(a_i) \otimes \hat{b}\qty(\tilde{\phi}_n(a_i))}|}\tilde{\phi}_n(a_{i,\parallel}) \\
        &= \tilde{\phi}_n(a_i) - \tilde{\phi}_n(a_{i,\parallel}) + \frac{\tilde{\sigma}\qty(\contract{k}{a_i \otimes \hat{b}(a_i)})}{\qty|\contract{k}{a_i \otimes \hat{b}(a_i)}|}\tilde{\phi}_n(a_{i,\parallel}) \\
        &= \tilde{\phi}_n\qty(a_i -a_{i,\parallel} + \frac{\tilde{\sigma}\qty(\contract{k}{a_i \otimes \hat{b}(a_i)})}{\qty|\contract{k}{a_i \otimes \hat{b}(a_i)}|}a_{i,\parallel}) \\
        &= \tilde{\phi}_n\qty(\sigma\qty((a_j)_{j=1}^r)) ~.
    \end{align}
    Thus the single pixel function commutes with $\tilde{\phi}_n,$ so the Vector neuron nonlinearity is compatible.
\end{proof}

Next we will show that max pooling is compatible.
\begin{prop}
    The max pool layer (\Cref{def:max_pool}) is compatible with \eqref{eq:tensor_image_compat}.
\end{prop}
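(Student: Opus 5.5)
We will verify the commutativity $\text{max\,pool}(\phi_n(A),b) = \phi_n(\text{max\,pool}(A,b))$ directly, pixel by pixel, for every $A\in\mathcal{A}_{N,n,k,p}$. Here the embedding on the right is the one for side length $N/b$. Max pooling is not point-wise, so \Cref{prop:pointwise_functions_compat} does not apply directly. It is, however, \emph{block-wise}: the output at $\bar\imath$ depends only on the block $\{b\,\bar\imath+\bar a : \bar a\in\ZNZ{b}{n}\}$. The plan is to show that the block structure and the argmax selection both factor through $\pi_n$.

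\textbf{Step 1.} Fix $\bar\imath_{n+1}\in\ZNZ{N/b}{n+1}$ and write $\bar\imath_{n+1}=(\bar\imath_n,j)$ with $\bar\imath_n=\pi_n(\bar\imath_{n+1})$. For $\bar a_{n+1}=(\bar a_n,a_{n+1})\in\ZNZ{b}{n+1}$ we have $\pi_n(b\,\bar\imath_{n+1}+\bar a_{n+1}) = b\,\bar\imath_n+\bar a_n$. Combined with \Cref{def:tensor_image_embedding}, this gives
\begin{equation*}
\phi_n(A)(b\,\bar\imath_{n+1}+\bar a_{n+1}) = \tilde\phi_n\qty(A(b\,\bar\imath_n+\bar a_n)).
\end{equation*}
Thus the $(n+1)$-dimensional block is $b$ stacked copies of the $n$-dimensional block, each zero-padded.

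\textbf{Step 2.} Zero-padding preserves the Frobenius norm, as already used in \Cref{cor:vector_neuron_compat}. Hence $\norm{\phi_n(A)(b\,\bar\imath_{n+1}+\bar a_{n+1})}_2=\norm{A(b\,\bar\imath_n+\bar a_n)}_2$, and this quantity does not depend on $a_{n+1}$. Consequently every maximizer $\bar a_{n+1}^*$ of the norm over the $(n+1)$-block projects to a maximizer $\bar a_n^*=\pi_n(\bar a_{n+1}^*)$ over the $n$-block. Plugging this into Step 1 gives
\begin{equation*}
\text{max\,pool}(\phi_n(A),b)(\bar\imath_{n+1}) = \tilde\phi_n\qty(A(b\,\bar\imath_n+\bar a_n^*)) = \tilde\phi_n\qty(\text{max\,pool}(A,b)(\bar\imath_n)),
\end{equation*}
which is exactly $\phi_n(\text{max\,pool}(A,b))(\bar\imath_{n+1})$.

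\textbf{Main obstacle: ties in the argmax.} \Cref{def:max_pool} implicitly fixes a tie-breaking rule, and the $(n+1)$-dimensional argmax always has at least $b$ tied maximizers differing only in the last coordinate. I would handle this in two ways. First, any two maximizers in the $(n+1)$-block that share the same projection $\bar a_n$ yield the identical tensor $\tilde\phi_n(A(b\,\bar\imath_n+\bar a_n))$, so ties introduced by the embedding are harmless. Second, for ties already present in the $n$-dimensional block, I would assume a consistent tie-breaking rule, for instance lexicographically first in the first $n$ coordinates, which commutes with $\pi_n$. Alternatively, I would note that the output is independent of the choice whenever the tied tensors coincide, and otherwise state the result up to the tie-breaking convention.
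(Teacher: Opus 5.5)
Your proposal is correct and takes essentially the same route as the paper's proof. Both unwind $\phi_n$ to see that each $(n+1)$-dimensional pooling block is $b$ zero-padded copies of the corresponding $n$-dimensional block, both use that zero-padding preserves the tensor norm, and both conclude that the $\pi_n$-projection of the argmax over the $(n+1)$-dimensional block is the argmax over the $n$-dimensional block. You also handle ties explicitly (harmless ones created by the embedding versus genuine ones needing a tie-breaking rule that commutes with $\pi_n$) and note that the output embedding is for side length $N/b$, which makes precise what the paper's computation silently assumes.
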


\begin{proof}
    Let $A \in \mathcal{A}_{N,n,k,p}$, let $b$ be an integer that divides $N$, and let $\bar\imath_{n+1} \in \ZNZ{N}{n+1}$.
    Then 
    \begin{align}
        &\phi_n(\text{max\,pool}(A,b))(\bar\imath_{n+1}) \\
        &= \tilde{\phi}_n\qty(\text{max\,pool} (A,b)\qty(\pi_n\qty(\bar\imath_{n+1}))) \\
        &= \tilde{\phi}_n\qty(A\qty(b\,\pi_n \qty(\bar\imath_{n+1}) + \argmax_{\bar a \in \ZNZ{b}{n}}\norm{A\qty(b\, \pi_n\qty(\bar\imath_{n+1}) + \bar a)}_2)) \\
        &= \tilde{\phi}_n\qty(A\qty(b\,\pi_n \qty(\bar\imath_{n+1}) + \pi_n\qty(\argmax_{\bar a \in \ZNZ{b}{n+1}}\norm{A\qty(b\, \pi_n\qty(\bar\imath_{n+1}) + \pi_n(\bar a))}_2))) \label{eq:max_pool_projection} \\
        &= \tilde{\phi}_n\qty(A\qty(\pi_n \qty(b\,\bar\imath_{n+1} + \argmax_{\bar a \in \ZNZ{b}{n+1}}\norm{\tilde{\phi}_n\qty(A\qty(\pi_n\qty(b\,\bar\imath_{n+1} + \bar a)))}_2))) \\
        &= \phi_n(A)\qty(b\,\bar\imath_{n+1} + \argmax_{\bar a \in \ZNZ{b}{n+1}}\norm{\phi_n(A)\qty(b\,\bar\imath_{n+1} + \bar a)}_2) \\
        &= \text{max\,pool}(\phi_n(A),b)(\bar\imath_{n+1}) ~.
    \end{align}
    The key insight of this proof is line \eqref{eq:max_pool_projection} where we note that the projection $\pi_n$ of the $\argmax$ over the $n+1$-dimensional hypercube of side length $b$ pixels is equivalent to the $\argmax$ over the $n$-dimensional hypercube.
    This completes the proof.
\end{proof}


\section{Lifting filters}\label{app.lifting_filters}

We first define the zero-pad embedding.

\begin{defn}[Zero-pad embedding]\label{def:zero_pad_embedding}
    The \emph{zero-pad embedding}, denoted $\varphi_n: \mathcal{A}_{M,n,k,p} \to \mathcal{A}_{M,n+1,k,p}$, is defined for all images $C \in \mathcal{A}_{M,n,k,p}$ and all $\bar\imath_{n+1} \in \ZNZ{M}{n+1}$ as
    \begin{equation}
        \varphi_n(C)(\bar\imath_{n+1}) = \begin{cases}
            \tilde{\phi}_n(C(\pi_n(\bar\imath_{n+1}))) & \text{ for } \tilde{\phi}_n(\pi_n(\bar\imath_{n+1})) = \bar\imath_{n+1} \\
            0 & \text{ otherwise }
        \end{cases} ~.
    \end{equation}
    Note that for filters, the indices are centered so that $\bar\imath = \qty{0,\ldots,0}$ is the center pixel.
\end{defn}

\begin{prop*}[Restatement of \Cref{prop:simple_argmin_objective}]
    Let filters and coefficients be as defined in \Cref{def:warmstart_coefficients}.
    If we let $\mathcal{A}'$ be the distribution of scalar images with independent identically distributed (iid) pixels with $\underset{A' \sim \mathcal{A}'}{E}\qty[A'(\bar\imath)] = 0$ and $\underset{A' \sim \mathcal{A}'}{E}\qty[A'(\bar\imath)^2] = 1$, then \eqref{eq:warmstart_coefficients} is equivalent to $\mathcal{L}\qty(\pi_n\qty(\varphi_n(C) - C'))$.
\end{prop*}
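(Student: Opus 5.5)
The plan is to rewrite both terms inside the loss of \eqref{eq:warmstart_coefficients} as a single convolution of the input image with one fixed filter. Then the iid assumption on the pixels turns the expected MSE into a squared norm of that filter.

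\textbf{Step 1: absorb $h$.} For $h\in G_{n+1}$, the distribution $\mathcal{A}'$ is invariant under $h$, because pixels are iid and $h$ only permutes pixel locations (scalars carry no tensor action). Geometric convolution with a $B_{n+1}$-invariant filter is $G_{n+1}$-equivariant \cite{gi_net}, so $h\cdot(A'\ast C') = (h\cdot A')\ast C'$. Substituting $A'' = h\cdot A'$, the inner expectation no longer depends on $h$, and the outer expectation over $G_{n+1}$ can be dropped.

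\textbf{Step 2: express both terms as convolutions on the slice.} The slice $\rho_n(A')$ followed by convolution with the $n$-dimensional filter $C$ equals $\rho_n$ of $A'\ast\varphi_n(C)$. This holds because $\varphi_n(C)$ is supported only on the central slice, so the convolution only reads pixels with last coordinate equal to that of the output pixel. The argument is an index chase with Definition~\ref{def:convolution} and Definition~\ref{def:zero_pad_embedding}, in the spirit of the proof of Theorem~\ref{thm:geometric_convolution_compat}, with $\pi_n$ and $\tilde\phi_n$ handling the tensor indices. By linearity of convolution, the difference inside $\mathcal{L}$ is therefore
\begin{equation*}
\rho_n\big(A'\ast(\varphi_n(C)-C')\big), \qquad\text{with the output tensor projected by } \pi_n .
\end{equation*}

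\textbf{Step 3: take the expectation.} Write $E:=\varphi_n(C)-C'$. Each output pixel is $\sum_{\bar a}A'(\bar\imath-c(\bar a))\,\pi_n(E(\bar a))$. Expanding the square and using independence, zero mean and unit variance, the cross terms vanish. Hence the expected squared value at each output pixel is $\sum_{\bar a}\|\pi_n(E(\bar a))\|^2$. This assumes $M\le N$, so that distinct filter offsets hit distinct pixels on the torus. The value is the same for every output pixel, so averaging over pixels gives, up to a positive constant factor (the normalisation of the MSE), $\mathcal{L}(\pi_n(\varphi_n(C)-C'))$.

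The compatibility constraint is untouched by these steps, so the two argmin problems coincide.

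The main obstacle is Step 2: matching $\rho_n$, the slice embedding $\tilde\phi_n$ on pixel indices, and the filter centering $c$ carefully enough that the slice-then-convolve term really equals convolution with $\varphi_n(C)$. A secondary point is justifying the distributional invariance under $G_{n+1}$ in Step 1, which is immediate here because the images are scalar.
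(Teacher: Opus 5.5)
Your proposal is correct and follows the paper's proof essentially step for step. You absorb $h$ via equivariance and drop the expectation over $G_{n+1}$, rewrite $\rho_n(A')\ast C$ as $\rho_n(A'\ast\varphi_n(C))$, and combine by linearity; expanding the squared norm then lets zero mean kill the cross terms and unit variance leave $\sum_{\bar a}\|\pi_n((\varphi_n(C)-C')(\bar a))\|_2^2$. Your explicit remarks on the invariance of the iid distribution, the $M\le N$ requirement, and the positive normalisation constant are points the paper leaves implicit, and none of them changes the argmin.
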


\begin{proof}
    Let filters and coefficients be as defined in \Cref{def:warmstart_coefficients}.
    Thus
    \begin{align}
        &\underset{h \in G_{n+1}}{E}\qty[\underset{A' \sim \mathcal{A}'}{E}\qty[ \mathcal{L}\qty(\qty(\rho_n\qty(h \cdot A') \ast C) - \rho_n\qty(h \cdot \qty(A' \ast C' )))]] \\
        &=\underset{h \in G_{n+1}}{E}\qty[\underset{A' \sim \mathcal{A}'}{E}\qty[ \mathcal{L}\qty(\qty(\rho_n\qty(h \cdot A') \ast C) - \rho_n\qty(\qty(h \cdot A') \ast C' ))]] \\
        &=\underset{A' \sim \mathcal{A}'}{E}\qty[ \mathcal{L}\qty(\qty(\rho_n \qty(A') \ast C) - \rho_n\qty(A' \ast C' ))] \\
        &=\underset{A' \sim \mathcal{A}'}{E}\qty[ \mathcal{L}\qty(\rho_n \qty(A' \ast \varphi_n(C)) - \rho_n\qty(A' \ast C'))] \\
        &=\underset{A' \sim \mathcal{A}'}{E}\qty[ \mathcal{L}\qty(\rho_n \qty(A' \ast \qty(\varphi_n(C) - C')))] ~.
    \end{align}
    Now we further break down the loss function $\mathcal{L}$:
    \begin{align}
        &\underset{A' \sim \mathcal{A}'}{E}\qty[ \mathcal{L}\qty(\rho_n \qty(A' \ast \qty(\varphi_n(C) - C')))] \\
        &= \underset{A' \sim \mathcal{A}'}{E}\qty[\frac{1}{N^n} \sum_{\bar\imath_n \in \ZNZ{N}{n}}\norm{\rho_n \qty(A' \ast \qty(\varphi_n(C) - C'))(\bar\imath_n)}^2_2] \\
        &= \underset{A' \sim \mathcal{A}'}{E}\qty[ \frac{1}{N^n} \sum_{\bar\imath_n \in \ZNZ{N}{n}}\norm{\rho_n \qty(A' \ast \qty(\varphi_n(C) - C'))(\bar\imath_n)}^2_2] \\
        &= \underset{A' \sim \mathcal{A}'}{E}\qty[ \frac{1}{N^n} \sum_{\bar\imath_d \in \ZNZ{N}{n}}\norm{\pi_n\qty(\qty(A' \ast \qty(\varphi_n(C) - C'))\qty(\tilde{\phi}_n\qty(\bar\imath_n)))}^2_2] \\
        &= \underset{A' \sim \mathcal{A}'}{E}\qty[ \frac{1}{N^n} \sum_{\bar\imath_n \in \ZNZ{N}{n}}\norm{\pi_n\qty(\sum_{\bar a_{n+1} \in \ZNZ{M}{n+1}}A'\qty(\tilde{\phi}_n\qty(\bar\imath_n) - \bar a_{n+1}) \otimes \qty(\varphi_n(C) - C')(\bar a_{n+1}))}^2_2] \\
        &= \underset{A' \sim \mathcal{A}'}{E}\qty[ \frac{1}{N^n} \sum_{\bar\imath_n \in \ZNZ{N}{n}}\norm{\sum_{\bar a_{n+1} \in \ZNZ{M}{n+1}}\pi_n\qty(A'\qty(\tilde{\phi}_n\qty(\bar\imath_n) - \bar a_{n+1})) \otimes \pi_n\qty(\qty(\varphi_n(C) - C')(\bar a_{n+1}))}^2_2] \label{eq:opt_minimization_left_off} ~.
    \end{align}
    Now we note that the squared norm of a sum of vectors can be decomposed as two sums, the first given by a norm on the matching vectors, and the second a double sum over the inner product of the different vectors.
    In our setting, the norm is still the tensor norm, and the inner product is the full k-contraction.
    Thus, the first sum is,
    \begin{align}
        &\sum_{\bar a_{n+1} \in \ZNZ{M}{n+1}} \norm{\pi_n\qty(A'\qty(\tilde{\phi}_n\qty(\bar\imath_n) - \bar a_{n+1})) \otimes \pi_n\qty(\qty(\varphi_n(C) - C')(\bar a_{n+1}))}^2_2 \\
        &=\sum_{\bar a_{n+1} \in \ZNZ{M}{n+1}} A'\qty(\tilde{\phi}_n\qty(\bar\imath_n) - \bar a_{n+1})^2 \norm{\pi_n\qty(\qty(\varphi_n(C) - C')(\bar a_{n+1}))}^2_2 ~,
    \end{align}
    with the equality following from the assumption that $A'$ is a scalar image.
    Likewise, the second sum, after moving the scalars outside the contraction, is
    \begin{align}
        &=2\sum_{\bar a_{n+1} \not = \bar b_{n+1} \in \ZNZ{M}{n+1}} A'\qty(\tilde{\phi}_n\qty(\bar\imath_n) - \bar a_{n+1}) A'\qty(\tilde{\phi}_n\qty(\bar\imath_n) - \bar b_{n+1}) \\
        &\hspace{4cm}\contract{k}{\pi_n\qty(\qty(\varphi_n(C) - C')(\bar a_{n+1})) \otimes \pi_n\qty(\qty(\varphi_n(C) - C')(\bar b_{n+1}))}
    \end{align}
    By linearity we can now directly apply the expectation to the terms with $A'$.
    Since $\underset{A' \sim \mathcal{A}'}{E}\qty[A'\qty(\tilde{\phi}_n\qty(\bar\imath_n) - \bar a_{n+1})] = 0$ by assumption, the second sum is $0$.
    By assumption $\underset{A' \sim \mathcal{A}'}{E}\qty[A'\qty(\tilde{\phi}_n\qty(\bar\imath_n) - \bar a_{n+1})^2] = 1$ in the first sum.
    If we plug these results back into \eqref{eq:opt_minimization_left_off} we have:
    \begin{align}
        & \underset{A' \sim \mathcal{A}'}{E}\qty[ \frac{1}{N^n} \sum_{\bar\imath_n \in \ZNZ{N}{n}}\norm{\sum_{\bar a_{n+1} \in \ZNZ{M}{n+1}}\pi_n\qty(A'\qty(\tilde{\phi}_n\qty(\bar\imath_n) - \bar a_{n+1})) \otimes \pi_n\qty(\qty(\varphi_n(C) - C')(\bar a_{n+1}))}^2_2] \\
        &=\frac{1}{N^n} \sum_{\bar\imath_n \in \ZNZ{N}{n}} \sum_{\bar a_{n+1} \in \ZNZ{M}{n+1}} \norm{\pi_n\qty(\qty(\varphi_n(C) - C')(\bar a_{n+1}))}^2_2 \\
        &= \sum_{\bar a_{n+1} \in \ZNZ{M}{n+1}} \norm{\pi_n\qty(\qty(\varphi_n(C) - C')(\bar a_{n+1}))}^2_2 \\
        &= \mathcal{L}\qty(\pi_n\qty(\varphi_n(C) - C')) ~.
    \end{align}
    This completes the proof.
\end{proof}

We now show the coefficients derived from \Cref{prop:simple_argmin_objective}.

\begin{table}[h]
    \centering
    \begin{tabular}{c|cccc}
        Type & Dimension & Weights \\
        \toprule
        Scalar filters 1D & $\alpha_0$ & $\alpha_1$ & \\
        2D Nepo & $\alpha_0 - \frac{2}{3}\alpha_1$ & $\frac{1}{3}\alpha_1$ & $\frac{1}{3}\alpha_1$ \\
        2D Copy & $\alpha_0 + 2 \alpha_1$ & $-\alpha_1$ & $\alpha_1$ \\
        2D Zeros & $\alpha_0 - 2 \alpha_1$ & $\alpha_1$ & 0 \\
        \midrule
        Scalar filters 2D & $\alpha_0$ & $\alpha_1$ & $\alpha_2$ \\
        3D & $\alpha_0 - \frac{10}{9}\alpha_1 + \frac{4}{9}\alpha_2$ & $\frac{5}{9}\alpha_1 - \frac{2}{9}\alpha_2$ & $\frac{2}{9}\alpha_1 + \frac{1}{9}\alpha_2$ & $-\frac{1}{9}\alpha_1 + \frac{4}{9}\alpha_2$ \\
        \midrule
        Vector filters 2D & $\alpha_0$ & $\alpha_1$ \\
        3D & $\alpha_0 - \frac{2}{3}\alpha_1$ & $\frac{1}{3}\alpha_1$ & $\frac{1}{3}\alpha_1$ \\
        \midrule
        2-tensor filters 2D & $\alpha_0$ & $\alpha_1$ & $\alpha_2$ & $\alpha_3$ \\
        3D & $\alpha_0 - \frac{2}{3}\alpha_1 + \frac{2}{3}\alpha_3$ & $\frac{5}{9}\alpha_1 - \frac{2}{9}\alpha_2 + \frac{1}{9}\alpha_3$ & $\frac{1}{9}\alpha_1 + \frac{2}{9}\alpha_2 - \frac{1}{9}\alpha_3$ & $\frac{1}{3}\alpha_2$ \\
        \cmidrule{2-5}
        2D & $\alpha_4$ \\
        3D & $\alpha_4$ & 0 & $-\frac{4}{9}\alpha_1 + \frac{4}{9}\alpha_2 - \frac{8}{9}\alpha_3$ & $\frac{2}{9}\alpha_1 - \frac{2}{9}\alpha_2 - \frac{2}{9}\alpha_3$
    \end{tabular}
    \caption{
    Coefficients for positive parity filters with side length 3.
    For each set of two, the first row is the weights for the lower dimensional filters, in order from the center of the filter outwards.
    The second row is the weights for the higher dimensional filters in terms of the weights of the lower dimensional filters.
    The 2D Copy and 2D Zeros coefficient transfer strategies are explained in \Cref{ap:warmstart_ablation}.
    }
    \label{tab:conversion_coefficients}
\end{table}

\section{Experimental Details}\label{ap:experimental_details}

\subsection{Data}\label{ap:data}

We use data from three PDEs, the heat equation (left), Burgers' equation (right):
\begin{equation}
    \frac{du}{dt} = \Delta u ~, \hspace{2cm} \frac{du}{dt} = -b \frac{1}{2} \nabla \cdot (u \otimes u) + \nu \nabla^2 u ~,
\end{equation}
and the compressible Navier--Stokes equations:
\begin{align}
    &\partial_t \rho + \nabla \cdot (\rho \vec{v}) = 0, \hspace{3mm} \rho\qty(\partial_t \vec{v} + \vec{v} \cdot \nabla \vec{v}) = - \nabla p + \eta \Delta \vec{v} + \qty(\zeta + \eta / 3)\nabla(\nabla \cdot \vec{v}), \\
    & \partial_t \qty(\epsilon + \rho v^2/2)+\nabla \cdot \qty[\qty(p + \epsilon + \rho v^2/2)\vec{v}-\vec{v}\cdot \sigma'] = 0
\end{align}

These are equations are used to generate fields which we discretize to images.
All images had a side length of 64 pixels.
The initial heat data $u_0$ is generated with a uniform distribution from $-\sqrt{3}$ to $\sqrt{3}$ in each pixel, and then the value at time $t=1$ was calculated using the analytic heat kernel with all points, where the distance to each point was the closest distance on the torus.

The data for Burgers' equation was generated by the Apebench package \cite{koehler2024apebench} using the difficulty scenario. The diffusion coefficient for 2D was 0.75 and for 3D was 2.25, while the convection coefficient was -0.75 for 2D and -2.25 for 3D. 
This was to undue the automatic scaling of the coefficients by dimension in the difficulty scenario so that the same parameters were used across dimensions.
Each trajectory was run for 50 time steps with 0 warmup steps.
Each trajectory of 50 steps was then split into 49 input output pairs of one timestep as part of the training data.

The data for the compressible Navier--Stokes equations is generated by PDEBench \cite{takamoto2024pdebenchextensivebenchmarkscientific}.
Although PDEBench has many datasets including Navier--Stokes in 2D and 3D, we require that exactly the same hyperparameters are used to generate data across dimensions.
Thus we used their code to generate additional data with shear and bulk viscosity of 1e-08, and Mach number 0.1.
Each trajectory was run for 21 time steps which where split into 17 data points of 4 input steps and 1 output step.

The size of the datasets in number of float values for $n$ dimensions, $b$ trajectories, $t$ time steps, image side length $N$, $s$ scalar channels, and $v$ vector channels is $btN^n(s+vn)$.

\subsection{Model and Training}\label{ap:training}

We used a UNet \cite{ronneberger2015unetconvolutionalnetworksbiomedical} as that was the best performing model in \cite{gi_net}.
We used filter side lengths of 3 with four downsampling blocks by max pooling and two convolution layers between each downsample.
For nonlinearities we used the Vector Neuron \cite{deng2021vectorneuronsgeneralframework} described in \cite{gi_net} with GeLU nonlinearity.
We used 48 channels as the convolution default which doubles every downsample.

We used two metrics during training and evaluation.
The first is the sum of mean squared error (SMSE) which takes the mean squared error over each field, then takes the sum over the fields.
If $A_j$ is the true \tensor{k}{p} image for field $j$ and $\hat{A}_j$ is our predicted \tensor{k}{p} image for field $j$, then the $\mathcal{L}_{\text{SMSE}}$ is defined as,
\begin{equation}
    \mathcal{L}_{\text{SMSE}}\qty(\qty{A_j}_{j=1}^J, \qty{\hat{A}_j}_{j=1}^J) = \sum_j \frac{1}{N^n} \sum_{\bar\imath} \norm{A_j(\bar\imath) - \hat{A}_j(\bar\imath)}_2^2 ~,
\end{equation}
where $\norm{\cdot}_2$ is the tensor norm.
The second metric is the relative error which is defined as,
\begin{equation}
    \mathcal{L}_{\text{rel}}\qty(\qty{A_j}_{j=1}^J, \qty{\hat{A}_j}_{j=1}^J) = \frac{1}{J}\sum_j \frac{\sqrt{\sum_{\bar\imath} \norm{A_j(\bar\imath) - \hat{A}_j\qty(\bar\imath)}_2^2}}{\sqrt{\sum_{\bar\imath} \norm{A_j(\bar\imath)}_2^2} + \epsilon} * 100 ~,
\end{equation}
where $\epsilon$ is some fixed small value to prevent division by 0 numerical issues, and we multiply by 100 to get the error as a percent.
We set $\epsilon = \verb|1e-5|$.
We found that calculating the relative error per pixel led to numerical instability during training, so we use the above version which can be viewed as taking the norm over the entire image for a field.
We train all models with the relative error, and report results for the relative error and SMSE.

We followed a similar training regime as in \cite{gi_net}.
We train for 50 epochs using the AdamW optimizer \cite{loshchilov2018adamw} with a weight decay of \verb|1e-5| and a cosine decay schedule \cite{loshchilov2017cosinedecay} to \verb|1e-7| with 5 epochs of warmup from \verb|1e-8|.
Peak learning rates were tuned for each problem for each dimension and for each different number of tuning points.
Learning rates were chosen from the set $\qty{1e-5, 5e-5, 1e-4, 5e-4, 1e-3, 5e-3}$ based on performance on a validation set of 128 points for the heat equation, 8 trajectories for Burgers' equation, or 32 trajectories for Navier--Stokes.
For the ablation over the alternative coefficient transfer strategies, the same learning rates tuned for our warm-start strategy were used for all the strategies.
It is possible that the performance could be improved by individually tuning the learning rates for each strategy, but the results for 0 tuning points shows that our warm-start strategy is likely to remain the best.
The resulting learning rates are shown in \Cref{tab:learning_rates}.

\begin{table}[h]
    \centering
    \begin{tabular}{ccc|ccccc}
        PDE & Models & Params & 1 & 4 & 8 & 32 & 128 \\
        \toprule
        & 1D Pre-train & 3,926,737 & & & & & \verb|1e-3| \\
        Heat & 2D Baseline & 5,691,697 & \verb|5e-4| & \verb|5e-4| & \verb|5e-4| & \verb|5e-4| & \verb|5e-4| \\ 
        & 2D Fine-tune & 5,691,697 & \verb|5e-4| & \verb|5e-4| & \verb|5e-4| & \verb|5e-4| & \verb|5e-4| \\ 
        \midrule
        & 2D Pre-train & 11,576,305 & & & \verb|1e-4| & & \\
        Burgers & 3D Baseline & 16,871,185 & \verb|1e-4| & \verb|1e-4| & \verb|1e-4| & & \\ 
        & 3D Fine-tune & 16,871,185 & \verb|1e-4| & \verb|1e-4| & \verb|1e-4| & & \\ 
        \midrule
        & 2D Pre-train & 25,114,131 & & & & & \verb|1e-4| \\
        N-S & 3D Baseline & 35,705,235 & \verb|5e-4| & \verb|1e-4| & \verb|1e-4| & \verb|1e-4| & \\ 
        & 3D Fine-tune & 35,705,235 & \verb|1e-4| & \verb|1e-4| & \verb|1e-4| & \verb|5e-5| & \\ 
    \end{tabular}
    \caption{Pre-training and fine-tuning learning rates for the number of data points or trajectories.}
    \label{tab:learning_rates}
\end{table}

We trained on a single RTX 6000 Ada GPU with a batch size of 8 for all experiments except for the Burgers' and Navier--Stokes 3D experiments where we used 2 GPUs with a batch size of 2 per GPU for an effective batch size of 4.

\subsection{Warm-start Ablation}\label{ap:warmstart_ablation}

We explored whether the warm-start coefficient transfer, \Cref{def:warmstart_coefficients} and \Cref{prop:simple_argmin_objective}, is the correct equation to optimize.
We experimented with two reasonable alternative strategies that satisfied compatibility but made different choices with the free parameter.
In the Copy strategy, we set the new coefficient for the outermost weight equal to the old coefficient for the outermost weight, then adjusted the other weights to maintain compatibility.
In the Zeros strategy, we set the new weight to 0 and adjusted the others likewise.
We tried these strategies for the heat equation, with the results shown in \Cref{fig:warmstart_ablation} and \Cref{tab:warmstart_ablation}.
Clearly our warm-start strategy performs better.
It is also interesting to note that in some cases, and particularly for the Copy strategy, a naive weight transfer strategy is worse than no pre-training at all.
This is true even though all strategies satisfy compatibility.

\begin{figure}[h]
    \centering
    \includegraphics[width=0.5\textwidth]{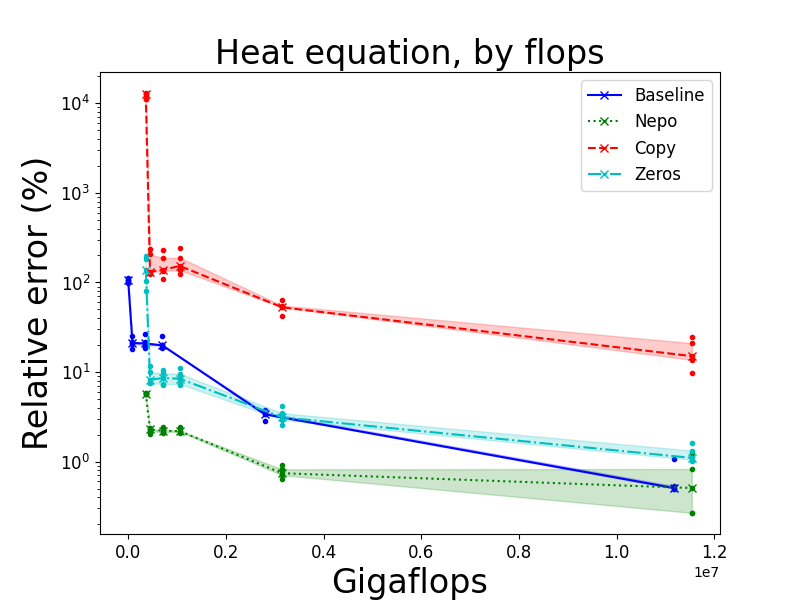}
    \caption{
    The median relative error performance of each warm-start strategy with the heat equation over varying numbers of fine tuning points, indexed by the total number of gigaflops to pre-train and fine-tune the models.
    Note the order of magnitude of the x-tick values, and the log values of the y-axis.
    Each experiment was run 5 times, with the lines representing the median value, the shaded region representing the interquartile range, and a dot for each value.
    }
    \label{fig:warmstart_ablation}
\end{figure}

\begin{table}[h]
    \centering
    \begin{tabular}{cc|cccccc}
        PDE & Models & 0 & 1 & 4 & 8 & 32 & 128 \\
        \toprule
        & Baseline & 105.3\% & 21.0\% & 20.8\% & 19.8\% & 3.3\% & \textbf{0.5\%} \\
        Heat & Copy & 12,797.3\% & 131.6\% & 138.5\% & 152.5\% & 52.7\% & 15.0\% \\ 
        & Zeros & 138.3\% & 8.1\% & 8.1\% & 8.4\% & 3.1\% & 1.0\% \\
        & Nepo & \textbf{5.7\%} & \textbf{2.2\%} & \textbf{2.2\%} & \textbf{2.1\%} & \textbf{0.7\%} & \textbf{0.5}\% \\
    \end{tabular}
    \caption{The median relative error performance over 5 trials of each warm-start strategy with the heat equation over varying numbers of fine tuning points.
    The column labels represent the number of fine-tuning points.
    With higher precision, the baseline method does better than Nepo by $0.001$\%.
    }
    \label{tab:warmstart_ablation}
\end{table}

\subsection{Additional Results}\label{ap:additional_results}

\begin{figure}[h]
    \centering
    \begin{minipage}{0.32\textwidth}
        \includegraphics[width=\textwidth]{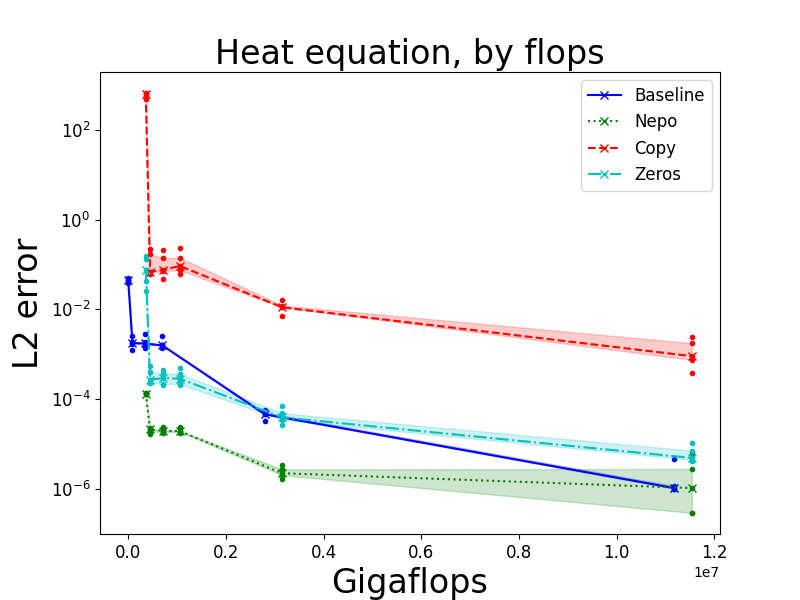}
    \end{minipage}
    \begin{minipage}{0.32\textwidth}
        \includegraphics[width=\textwidth]{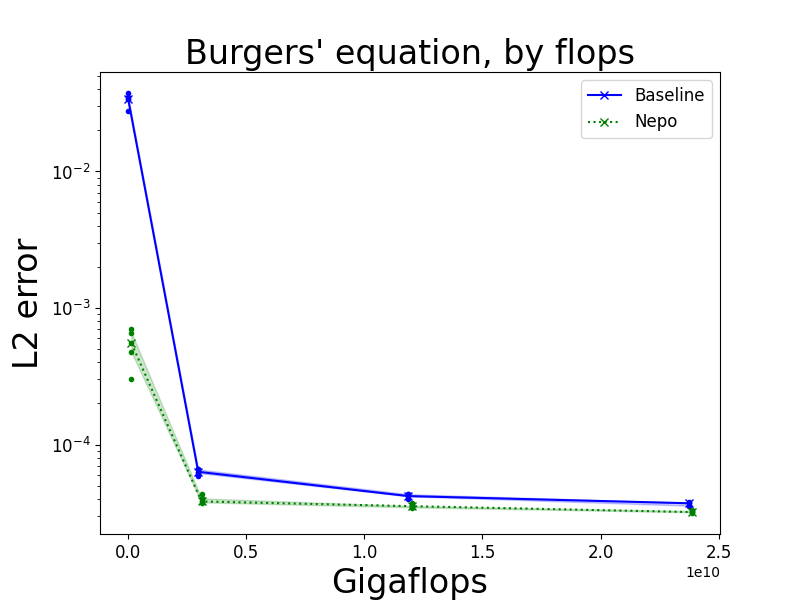}
    \end{minipage}
    \begin{minipage}{0.32\textwidth}
        \includegraphics[width=\textwidth]{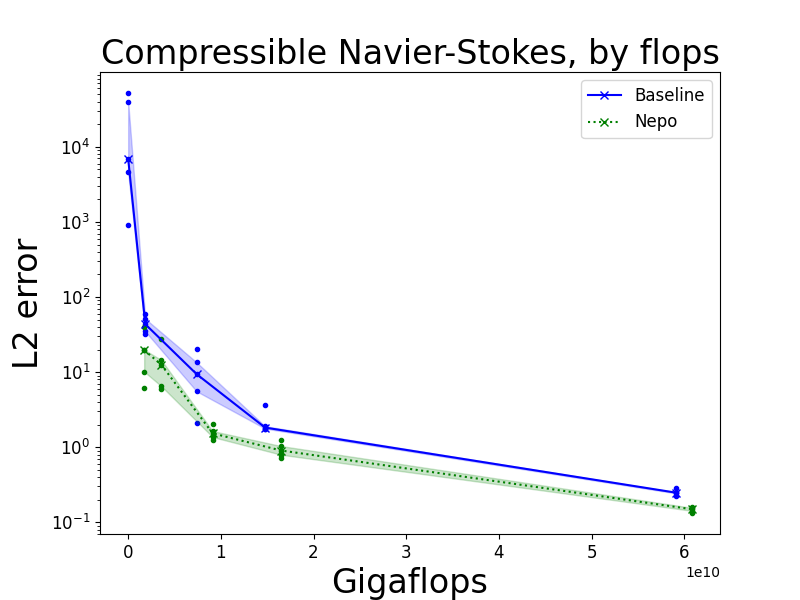}
    \end{minipage}
    \begin{minipage}{0.32\textwidth}
        \centering
        \textbf{(a)}
    \end{minipage}
    \begin{minipage}{0.32\textwidth}
        \centering
        \textbf{(b)}
    \end{minipage}
    \begin{minipage}{0.32\textwidth}
        \centering
        \textbf{(c)}
    \end{minipage}
    \caption{$\ell_2$ error performance of each PDE over varying numbers of fine tuning points, indexed by the total number of gigaflops to pre-train and fine-tune the models.
    Note the order of magnitude of the x-tick values, and the log values of the y-axis.
    Each experiment was run 5 times, with the lines representing the median value, the shaded region representing the interquartile range, and a dot for each value.
    \textbf{(a)} The heat equation trained on 0, 1, 4, 8, 32, and 128 fine-tuning points.
    \textbf{(b)} Burgers' equation trained on 0, 1, 4, and 8 trajectories.
    \textbf{(c)} The compressible Navier--Stokes equations trained on 0, 1, 4, 8, and 32 trajectories.
    }
    \label{fig:warm-start_plots_l2}
\end{figure}

\begin{table}[h]
    \centering
    \begin{tabular}{cc|cccccc}
        PDE & Models & 0 & 1 & 4 & 8 & 32 & 128 \\
        \toprule
        & Baseline & 4.4e-02 & 1.7e-03 &1.7e-03 & 1.5e-03 & 4.5e-05 & \textbf{1.0e-06} \\
        Heat & Copy & 6.5e+02 & 6.9e-02 & 7.6e-02 & 9.3e-02 & 1.1e-02 & 9.0e-04 \\ 
        & Zeros & 7.6e-02 & 2.6e-04 & 2.9e-04 & 2.8e-04 & 3.8e-05 & 4.8e-06 \\
        & Nepo & \textbf{1.3e-04} & \textbf{2.0e-05} & \textbf{1.9e-05} & \textbf{1.9e-05} & \textbf{2.2e-06} & \textbf{1.0e-06} \\
        \midrule
        Burgers & Baseline & 3.3e-02 & 6.3e-05 & 4.1e-05 & 3.7e-05 & Time out & Time out \\
        & Nepo & \textbf{5.5e-04} & \textbf{3.8e-05} & \textbf{3.5e-05} & \textbf{3.2e-05} & Time out & Time out \\
        \midrule
        N-S & Baseline & 6.9e+03 & 4.3e+01 & 9.3e+00 & 1.8e+00 & 2.4e-01 & Time out \\
        & Nepo & \textbf{1.9e+01} & \textbf{1.2e+01} & \textbf{1.5e+00} & \textbf{9.0e-01} & \textbf{1.4e-01} & Time out \\
    \end{tabular}
    \caption{The median $\ell_2$ error performance over 5 trials of each PDE over varying numbers of fine tuning points.
    N-S refers to compressible Navier--Stokes.
    For the heat equation, the column labels represent the number of fine-tuning points, while for Burgers and Navier--Stokes, they refer to the number of trajectories of 17 and 49 time steps each, respectively.
    The positions in the table labeled ``Time out'' were training loops that would have taken longer than 15 hours per trial.
    With higher precision, the baseline method does better than Nepo for the heat equation by 1e-09.
    }
    \label{tab:smse}
\end{table}

\begin{table}[h]
    \centering
    \begin{tabular}{cc|cccccc}
        PDE & Models & n=0 & n=1 & n=4 & n=8 & n=32 & n=128 \\
        \toprule
        Heat & Baseline & 0 & 8.732e+04 & 3.493e+05 & 6.986e+05 & 2.794e+06 & 1.117e+07 \\
        & Nepo & 3.594e+05 & 4.467e+05 & 7.087e+05 & 1.058e+06 & 3.153e+06 & 1.153e+07 \\ 
        \midrule
        Burgers & Baseline & 0 & 2.965e+09 & 1.186e+10 & 2.372e+10 \\
        & Nepo & 1.396e+08 & 3.105e+09 & 1.200e+10 & 2.386e+10 \\
        \midrule
        N-S & Baseline & 0 & 1.846e+09 & 7.386e+09 & 1.477e+10 & 5.908e+10 \\
        & Nepo & 1.724e+09 & 3.571e+09 & 9.110e+09 & 1.649e+10 & 6.081e+10 \\
    \end{tabular}
    \caption{Estimated gigaflops of training process.
    Nepo warm-start includes the time to pre-train on lower dimensional data and convert the model.}
    \label{tab:flops}
\end{table}

\newpage




\end{document}